\newcommand{\example}[1]{\texttt{#1}}
\newcommand{\action}[1]{\example{\##1}}
\newcommand{\squishlist}{
   \begin{list}{$\bullet$}
    { \setlength{\itemsep}{0pt}      \setlength{\parsep}{3pt}
      \setlength{\topsep}{3pt}       \setlength{\partopsep}{0pt}
      \setlength{\leftmargin}{1.5em} \setlength{\labelwidth}{1em}
      \setlength{\labelsep}{0.5em} } }
\newcommand{\squishlisttwo}{
   \begin{list}{$\bullet$}
    { \setlength{\itemsep}{0pt}    \setlength{\parsep}{0pt}
      \setlength{\topsep}{0pt}     \setlength{\partopsep}{0pt}
      \setlength{\leftmargin}{2em} \setlength{\labelwidth}{1.5em}
      \setlength{\labelsep}{0.5em} } }
\newcommand{\squishend}{
    \end{list}  }
\newtheorem{thm}{Theorem}[section]
\newtheorem{defn}{Definition}[section]
\newcommand{\alphabar}[0]{\ensuremath \overline{\alpha}}
\DeclareMathAlphabet{\mathpzc}{OT1}{pzc}{m}{n}
\theoremstyle{plain}
\theoremstyle{definition}
\theoremstyle{remark}
\renewcommand{\cite}{\citep}
\definecolor{darkblue}{rgb}{0, 0, 0.5}
\crefname{section}{\S}{\S\S} %
\Crefname{section}{\S}{\S\S} %
\title{Don't lie to your friends: \\Learning what you know from collaborative self-play}
\author{Jacob Eisenstein \And Reza Aghajani \And Adam Fisch \And Dheeru Dua \AND Fantine Huot \And Mirella Lapata \And Vicky Zayats \And Jonathan Berant \thanks{Core contributors: JE, RA, AF, JB.}\\
\AND\\
Google DeepMind\\
\texttt{jeisenstein@google.com}
}
\begin{document}
\setlength{\textfloatsep}{20pt}
\ifcolmsubmission
\linenumbers
\fi

\maketitle

\begin{abstract}
To be helpful assistants, AI agents must be aware of their own capabilities and limitations. This includes knowing when to answer from parametric knowledge versus using tools, when to trust tool outputs, and when to abstain or hedge. Such capabilities are hard to teach through supervised fine-tuning because they require constructing examples that reflect the agent's specific capabilities. We therefore propose a radically new approach to teaching agents what they know: \emph{collaborative self-play}. We construct multi-agent collaborations in which the group is rewarded for collectively arriving at correct answers. The desired meta-knowledge emerges from the incentives built into the structure of the interaction. We focus on small societies of agents that have access to heterogeneous tools (corpus-specific retrieval), and therefore must collaborate to maximize their success with minimal effort. Experiments show that group-level rewards for multi-agent communities can induce policies that \emph{transfer} to improve tool use and selective prediction in single-agent scenarios.

\end{abstract}

\section{Introduction}

While conversational assistants based on language models (LMs) are having unprecedented success~\citep{geminiteam2024gemini15unlockingmultimodal,grattafiori2024llama3herdmodels,openai2024gpt4technicalreport,deepseekai2025deepseekr1incentivizingreasoningcapability}, there is growing evidence that skills that are crucial for successful human-AI collaboration are still lacking.
Compared with human speakers, LMs perform fewer grounding actions, such as asking clarification questions~\citep{shaikh-etal-2024-grounding}; they do not express uncertainty faithfully~\citep{zhou-etal-2024-relying,yona-etal-2024-large,stengel-eskin2024lacie}; 
and they often use outputs from external tools incorrectly~\citep{yoran2024making,wu2024clasheval}.

People build and use such skills when communicating in natural language to achieve collaborative goals~\citep{grice1975logic}. 
Such collaborations are more likely to be successful when participants are truthful about their knowledge, ask for help when needed, and use external resources when their knowledge is limited. 
This requires meta-cognitive capabilities such as estimating and communicating uncertainty, and learning to evaluate the reliability of provided information. 
But post-training procedures for endowing these skills onto LMs are devoid of this social, goal-oriented signal, relying instead on supervised fine-tuning on curated examples~\citep{Zhang2023RTuningIL,stengel-eskin2024lacie,li-etal-2023-large,yoran2024making}. This also entails that data collection needs to be carried out per skill.

In this work, we propose \emph{collaborative self-play} (CSP) as a mechanism to teach language models to be more helpful, by constructing multi-agent environments where accomplishing goals relies on learning to be an efficient and effective communicator. Specifically, a small society of LMs, each with access to a different retrieval tool, is presented with a set of factoid questions. As shown in \Cref{fig:framework}, answering correctly requires the agents to know (i) when their parametric knowledge is reliable, (ii) when to express uncertainty to avoid misleading others, and (iii) how to appropriately use their retrieval tools. We hypothesize that training towards this multi-agent setting should encourage better tool use and hedging, which are useful skills even in single-agent scenarios.

\begin{SCfigure}[50]
    \centering
    \includegraphics[width=0.4\textwidth]{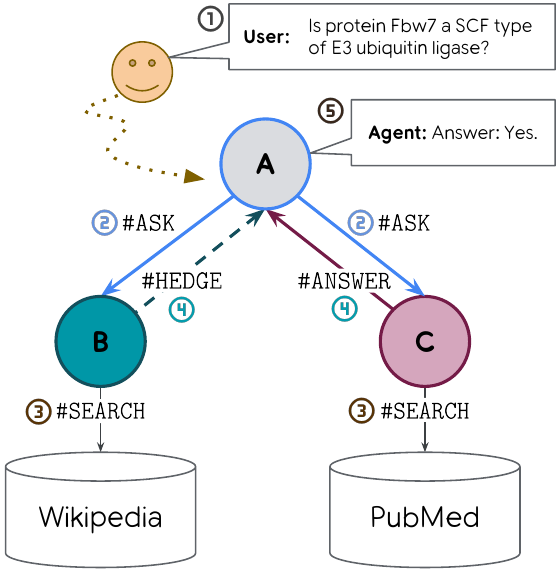}
\caption{\textbf{Collaborative self-play for question answering.} A small society of agents is asked to answer a question. Success requires cooperation: agents must use their unique resources and share not only information but also calibrated expressions of confidence. Here, Agent A receives the initial query from the user \textcircled{1}, and asks for help from Agent B and C \textcircled{2}, who search \textcircled{3} and then respond to Agent A \textcircled{4}, after which Agent A chooses a final response \textcircled{5}. Agent B does not have access to a relevant retriever, and therefore marks its prediction as uncertain.%
}\label{fig:framework}
\end{SCfigure}

To test this hypothesis, we instantiate the game illustrated in \Cref{fig:framework}, and fine-tune a language model on rollouts from the above multi-agent environment using Reinforced Self-Training~\citep[ReST;][]{gulcehre2023reinforced}.
Specifically, we sample multi-agent interactions from the environment and fine-tune on the rollouts that obtain the highest reward. Crucially, the reward is defined only in terms of \emph{task completion} and \emph{task effort}; we want to achieve high accuracy while avoiding unnecessary tool calls. The actions of individual agents in the rollout are not prescribed or otherwise explicitly rewarded. All agents share parameters (they differ by their prompts and tools), and thus only a single model is trained and evaluated in a \emph{single-agent} setup at test time. Empirically,  on BioASQ~\cite{krithara2023bioasq} and PopQA~\cite{mallen-etal-2023-trust}, two factoid question answering benchmarks that benefit from corpus-specific retrieval, agents trained using CSP become get better at determining precisely when to search, while learning to expressing uncertainty in a more calibrated manner. This advantage carries over to two unseen benchmarks.

In addition to these experiments, we provide a game-theoretic analysis that sheds light on the conditions under which the optimal strategy for collaborative self-play aligns with the goals of calibrated confidence and tool use, providing concrete guidance for the design of collaborative self-play learning scenarios. To summarize the paper's contributions:
\begin{enumerate}[leftmargin=*,itemsep=0pt]
    \item We propose a general framework based on collaborative self-play for teaching conversational LM-based agents to be better collaborators when solving tasks.
    \item In the context of retrieval-augmented QA, we empirically demonstrate that this framework teaches agents to be better selective tool users and to better express their uncertainty.
    \item We provide a game-theoretic analysis that identifies the conditions under which collaborative self-play can induce efficient tool use and calibrated question answering.
\end{enumerate}

\section{Related work}
\textbf{Multi-agent LLM systems.}
A growing body of work considers systems of multiple language models~\citep[for surveys, see][]{zhuge2023mindstorms,guo2024large}, particularly for inference-time deliberation and debate~\citep{du2023improving,chan2023chateval,chen2023reconcile,khan2024debating}.
In contrast, we focus on the impact of multi-agent coordination at the training stage. Along these lines is work in which training examples for reasoning are distilled from multi-agent dialogues~\cite{chen2024magdi,subramaniam2025multiagent}. We do not work towards specific end-task reasoning but rather we use multi-agent collaboration as a training environment to learn skills that should improve collaboration with humans.

\textbf{Social learning.}
\label{app:related-work-social}
Our approach can be viewed in the context of \emph{social learning}~\cite{bandura1977social}, which considers the role of socialization in improving predictive performance and adaptability, where agents can learn new behaviors by observing and imitating others. For example, \citet{yao2024socialized} explored how multi-agent collaboration among specialized experts can lead to mutual improvement on image classification tasks, where agents learn from each other to improve their predictions on image classes outside their respective domains of expertise through a group-wide knowledge distillation loss. Similarly, in a LM setting, \citet{mohtashami2023social} use teacher agents to compose instructions or exemplars for student agents, and \citet{wang2024sotopia} use an AI judge to assign reward to the student's social performance. In contrast, we do not designate any agent as a teacher or judge, but rather construct a mechanism in which pro-social behavior is required to get a high reward. 

Related ideas are explored in the broader context of multi-agent reinforcement learning~\cite{marl-book}, particularly with respect to learning to communicate~\citep{foerster2016learning, sukhbaatar2016}. Here, however, our goal is not to learn an effective multi-agent system, but rather to use the multi-agent context to learn strong single-agent policies --- similar to the competitive games underlying GANs and adversarial domain adaptation~\cite{goodfellow2014generative,ganin2016domain}, but here in a cooperative setting that is furthermore grounded in conversational, natural language. In this respect we deviate from games that are cooperative but non-linguistic~\citep[e.g.,][]{bard2020hanabi} or which are linguistic but non-cooperative~\citep[e.g.,][]{bakhtin2022human}. An additional distinction, which extends to games that are linguistic and at least partially cooperative~\citep[e.g.,][]{liao2024efficacy,xu2023exploring}, is that our goal is not to train language models to succeed in games, but to use games to teach more widely-applicable conversational skills. 

\textbf{Calibration and confidence.} Language model confidence estimation is an active topic~\citep[e.g.,][]{kadavath2022language,mielke2022reducing,yang2023alignment,Lin2022TeachingMT,stengel-eskin2024lacie,quach2024conformal,pmlr-v235-mohri24a}, and prior work includes inference-time multi-agent  debate~\citep{du2023improving,feng2024donthallucinateabstainidentifying}. While some papers claim that LLMs can be prompted to reveal their own uncertainty~\citep[e.g.,]{kadavath2022language,tian2023just}, other work raises serious doubts~\citep[e.g.,][]{xiong2023can,kapoor2024large}, and in any case, these findings are generally restricted to parametric knowledge rather than the retrieval-augmented models that we consider here. More broadly, the decision to answer or abstain should be driven by two features that are often hard to know in advance: the likelihood that the agent can arrive at the correct answer, and the consequences of abstention vs being incorrect. This view is aligned with \citet{stengel-eskin2024lacie}, who are also motivated by linguistic pragmatics. But rather than stipulating which action is pragmatically correct in a given context, we create a self-play scenario in which pragmatic reasoning emerges in the solution to the collaborative game. Uncertainty is often decomposed into epistemic (due to lack of knowledge) and aleatoric~\citep[due to irreducible randomness;][]{hullermeier2021aleatoric}. While the subtleties of this distinction are orthogonal to our contribution, we note that our focus is on epistemic uncertainty with regard to (a) the model parameters, and (b) tool outputs, and not, for example, uncertainty about the intent behind the user's query~\citep{min2020ambigqa}.
\looseness=-1
 
\begin{SCfigure}[]
\begin{minipage}[!t]{0.57\textwidth} %
\input{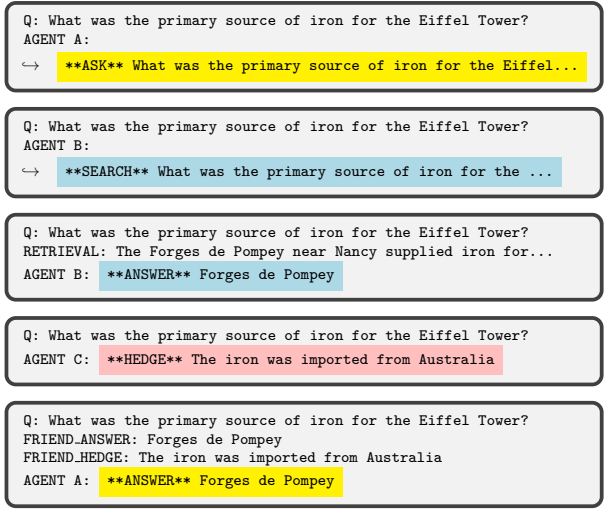}
\end{minipage}
\caption{\textbf{An example multi-agent rollout}, showing the \action{ASK}, \action{SEARCH}, \action{HEDGE}, and \action{ANSWER} actions, as well as how evidence is incorporated into the prompt. Agent A has no tools, so it asks the other agents for help. Agent B believes its retrieval tool can help, so it performs a search, which returns useful information. Agent C does not have a tool that it believes is applicable. It falls back to parametric knowledge, but marks its answer as a \action{HEDGE}. Agent A receives the two answers, and selects the more confident one, which turns out to be correct.
}
\label{fig:example_rollout}
\end{SCfigure}

\textbf{Game-theoretic dialogue.} Prior work offers game-theoretic accounts of various dialogue phenomena, such as implicature~\cite{parikh1991communication,franke2009signal}. We also build on game theory by analyzing the equilibria of a game-theoretic model of collaborative self-play. However, our goal is not to explain features of human conversation but to characterize the conditions under which our proposed mechanism induces normatively-desirable behavior (i.e., calibrated expressions of confidence) from rational agents.

\section{Social supervision from collaborative self-play}
\label{sec:collaborative-self-play}
This section describes our framework for obtaining social supervision from collaborative self-play. By \emph{social supervision}, we mean that the training signal emerges from the efficacy of interactions between agents, rather than from direct annotations of reference outputs of individual agents at each step of the rollout. The restriction to \emph{collaborative self-play} is entailed by the use of a single reward for the entire group of agents that all share the same model parameters, rather than distinct rewards and models per agent: all agents are playing together with the same objective to optimize a joint set of parameters.

\textbf{Multi-agent orchestration.}
\label{sec:csp-orchestration}
For each initial prompt or query $x_1$ we define a rollout as $((a_1, x_1, y_1), (a_2, x_2, y_2), \ldots, (a_T, x_T, y_T))$, with $a_t \in \mathcal{A}$ indicating the active agent at turn $t$ and $x_t, y_t \in \mathcal{V}^*$ indicating its input and output respectively. The initial agent $a_1$ is selected from some initialization policy, and each subsequent agent is then determined by an \emph{orchestrator}, based on the history of the rollout. The rollout continues until either we reach a terminal state, we reach $t=T$, or the rollout exhausts some maximum effort budget, which can be defined in terms of the cost of each action. The terminal state should return an output that can be scored.\looseness=-1

The job of the orchestrator is to determine who speaks next: at each step $t < T$, the orchestrator passes control to agent $a_{t+1}$ (which can potentially be the same as $a_t$), and issues a prompt $x_{t+1}$. The new prompt may incorporate tool outputs and communication from other agents. For example, if agent $a_t$ uses a retrieval tool, the orchestrator will add the retrieval results to the prompt $x_{t+1}$ as evidence, while setting $a_{t+1} = a_t$. If agent $a_t$ poses a question, the orchestrator decides on a new $a_{t+1} \neq a_t$ and constructs a prompt $x_{t+1}$ that includes this question. The orchestrator can then remember that the question was posed by $a_t$, and may then return control to this agent after $a_{t+1}$ is finished --- or it may pass control to another agent before returning to $a_t$. There are many possible orchestrators, each leading to different patterns of multi-agent interactions: for example, an orchestration policy may allow each agent to broadcast communication to all the others in an egalitarian society, or it may require that all communication flow through a single hub node. %

\textbf{Actions and arguments.}
We now describe an instantiation of collaborative self-play that is designed for training agents to organically learn to use tools and give confidence-calibrated outputs.
At each step, each agent's output is required to be an \example{[action]} and an \example{[argument]}, where \example{[action]} is limited to the set $\{ \action{ANSWER}, \action{ASK},  \action{HEDGE}$\} plus actions for each tool available to the agent. This is relatively constrained compared to other implementations of tool use, in which the tool call can appear at any point in the generation~\citep{schick2023, mialon2023augmentedlanguagemodelssurvey};  such extensions can be considered in future work. After the \example{[action]} and \example{[argument]} are sampled from the policy, they are appended to the history, and the orchestrator is called to identify the next agent and its prompt.

When the sampled \example{[action]} is a tool, the orchestrator maintains control with the current agent and (potentially) adds evidence to the prompt. We focus on a single type of \action{SEARCH} tool, which obtains a set of near-neighbor retrievals and appends them to the prompt with the preceding tag \example{RETRIEVAL}, as shown in \Cref{fig:example_rollout}. A similar approach could be applied to other tools, such as calculators or code interpreters. The remaining actions (i.e., $\{ \action{ANSWER}, \action{ASK},  \action{HEDGE}$\}) pass control to other agents, as described below.

\textbf{Inter-agent communication and orchestration.}
Inter-agent communication is enabled by the actions \action{ASK}, \action{ANSWER}, and \action{HEDGE}. If agent $a_t$ issues the \action{ASK} action, then the orchestrator pushes $a_t$ onto a stack $S$ and passes control to a new agent $a_{t+1} \neq a_t$. The argument to the ask action is appended to the prompt $x_{t+1}$, as shown in \Cref{fig:example_rollout}. Agents relinquish control by issuing the actions \action{ANSWER} and \action{HEDGE}. The orchestrator can then return control to the agent at the top of the ask stack, or, in a ``broadcast'' setting, can pass control to another agent. When control returns to $a_t$, its prompt is updated with the string arguments provided by the answering agent(s), with the preceding tag \example{FRIEND\_ANSWER} or \example{FRIEND\_HEDGE} (see~\Cref{fig:example_rollout}). If the ask stack is empty when a control-relinquishing action is issued, the rollout has entered a terminal state.\looseness=-1

We implement a broadcast version of the \action{ASK} action. The action causes the ask stack to be updated as $S_t \gets a_t \circ S_{t-1}.$ Control then passes to each agent reachable from $S_t$. Conversely, if $a_t$ issues a control-relinquishing action with ask stack $S_{t-1} = a \circ S'$, then control passes to agent $a$ with ask stack $S_t = S'$.
We reach a terminal state when the initial agent $a_1$ relinquishes control and the stack $S_T$ is empty. Reward is then computed as effort-penalized task performance, i.e., we would like to return a correct answer, while performing as few tool calls as possible. Specifically, $r(h_T) = \textit{Score}(y_T, y^*) - \delta \cdot \textit{Effort}(h_T)$, where $\textit{Score}$ is a metric like token-level $F_1$, effort is the number of search calls in the rollout, and $\delta$ is a hyper-parameter.\looseness=-1

\textbf{Training.}
\label{sec:csp-training}
Given a society, an orchestrator, and a reward function, we hope to learn language model policies that yield high expected reward. We apply Reinforced Self-Training (ReST), an  extension of supervised learning that is commonly used in post-training~\citep[e.g.,][]{gulcehre2023reinforced,zhang2024rest}. The idea is to generate rollouts, score them, train on the good ones, and then iterate with the newly trained policy. However, the application of ReST to collaborative self-play requires adaptation to avoid fine-tuning on lucky guesses that do not lead to generally effective policies.\looseness=-1

\begin{algorithm}[tb]
\caption{Action/Answer Reinforced Self-Training}
\small
\label{alg:multi-agent-rest}
\begin{algorithmic}
\REQUIRE Initial model $\mathcal{M}_0$, reward threshold $\tau$, number of steps $n_s$, and rollouts per query $n_r$, number of ReST iterations $T$
\ENSURE Trained model $\mathcal{M}_{\textrm{ReST}}$

\FOR{$t = 1$ \TO $T$}
    \STATE \textbf{Step 0:} Define an empty training set $\mathcal{D} = \{\}$
    \FOR{query $q_i$}
        \STATE \textbf{Step 1a:} Generate rollouts $\{\rho_{i,j}\}_{j=1}^{n_r}$ by applying model $\mathcal{M}_{t-1}$ to $q_i$ in multi-agent orchestration.
        \STATE \textbf{Step 1b:} Partition the rollouts into action sequences (see \Cref{sec:csp-training}), and let $\mathbf{s}_{i,*}$ indicate the action sequence with the maximum mean reward among its compatible rollouts.
        \IF{the mean reward of $\mathbf{s}_{i,*} > \tau$}
        \STATE \textbf{Step 1c:} Among rollouts compatible with $\mathbf{s}_{i,*}$, select the one with the highest reward, $\rho_{i,*} = \arg\max_j \{r_{i,j}: \rho_{i,j} \vdash \mathbf{s}_{i,*}\}.$
        \STATE \textbf{Step 1d:} Convert $\rho_{i,*}$ into turn-level training examples and add them to $\mathcal{D}.$
        \ENDIF
    \ENDFOR
    \STATE \textbf{Step 2:} Train $\mathcal{M}_t$ on a training split of $\mathcal{D}$ for $n_s$ steps.
    \STATE \textbf{Step 3:} Select the checkpoint with the smallest held-out log-likelihood on a dev split of $\mathcal{D}$.
\ENDFOR
\RETURN $\mathcal{M}_T$
\end{algorithmic}
\end{algorithm}

Specifically, we will focus first on identifying sequences of \emph{actions} that reliably lead to good rewards, and then select the best \emph{rollout} compatible with such action sequences (see \Cref{alg:multi-agent-rest}). Recall that a rollout is defined as $\rho = \{(a_t, x_t, y_t)\}_{t=1}^T$, with $y_t = [z_t, c_t]$ now composed of an action $z_t \in \mathcal{Z}$ and an optional argument $c_t$. An action sequence $\mathbf{s} = (s_1, s_2, \ldots)$ is a sequence of actions, $s_t \in \mathcal{Z}$ (such as \action{ANSWER} or \action{SEARCH}). If for a given rollout $\rho$ we have $z_t = s_t$ for all $t$, then we write $\rho \vdash \mathbf{s}$ to indicate that $\rho$ is \emph{compatible} with $s$. Now a set of rollouts, $\{ \rho_1, \rho_2, \ldots \}$ can be coarsened into a set of action sequences $\{ \mathbf{s}_1, \mathbf{s}_2, \ldots \}$. Each action sequence is scored by the average reward of its compatible rollouts, and the best action sequence $\mathbf{s}^*$ is selected. If this average reward is above some threshold, we then choose the best individual rollout $\phi^* \vdash \mathbf{s}^*$, and train on each step in this rollout. In ReST, training is done iteratively, and we run it for $T$ epochs, where in each epoch rollouts are sampled from the mostly recently trained model.

Importantly, while we have a multi-agent environment, all agents share the same parameters and only differ in their tool access and input prompt, and thus we train a single model. This is necessary because our goal is to use the multi-agent environment to elicit training data that will improve the model, but then have a single agent at test time.

\begin{SCtable}[]
    \centering
    \begin{tabular}{|l|cc|}
    \hline
    & $Z_2 = S$ & $Z_2 = G$\\
    \hline
         $Z_1 = S$ 
         & $(\alphabar-\delta, \alphabar-\delta)$ 
         & $(\alpha_1 - \delta, \alpha_1)$ \\
         $Z_1 = G$ & $(\alpha_2, \alpha_2 - \delta)$
         & $(\beta, \beta)$\\
         \hline
    \end{tabular}
    \caption{Payoff matrix for the  game described in \Cref{sec:expert-society-discussion}, where each cell shows the reward for the two players, $(r_1, r_2)$.}
    \label{tab:payoff-matrix}
\end{SCtable}

\section{Analysis: An information-provision game}
\label{sec:expert-society-discussion}

Our core intution is that the group can achieve a high reward only by learning to calibrate its use of the \action{ANSWER} and \action{HEDGE} actions. To clarify the necessary conditions for this to work, we analyze the pure strategy equilibria of a simplified two-player game, where an asker agent issues a question to two players with access to different retrieval tools, and each player has the choice between actions $S$ (search-then-answer) and $G$ (guess-then-hedge). Agent $i$ generates the correct answer with probability $\alpha_i$ when playing action $Z_i = S$, and $\beta$ when playing action $Z_i = G$. When one player plays $Z_i = S$ and the other player plays $Z_j = G$, the asker chooses the more confident answer (from player $i$); otherwise the asker picks one of the two answers at random. Both players receive a reward of $1$ if the question is answered correctly and zero otherwise; the search tool incurs a cost of $\delta>0$. \Cref{tab:payoff-matrix} shows the \emph{expected} rewards under the joint distribution associated with the correctness probabilities $\alpha_1, \alpha_2, \beta$ and the random choice made by the asker, with $\alphabar = \frac{1}{2}(\alpha_1 + \alpha_2).$\footnote{This game is similar to the classical example of the provision of a public good~\citep[see, e.g., ][Chapter 2]{olson1971logic,osborne2004introduction}, where the good is the information provided by the search tool.\looseness=-1} We analyze this game to find the parametrizations under which there is a single equilibrium corresponding to our normative expectation of the agents' behavior under full information: i.e., the use of the $S$ action should indicate that the agent is especially likely to provide a correct answer.

\begin{thm}
\label{thm:two-player}
Assume without loss of generality that $\alpha_1 > \alphabar > \alpha_2$, $\{\alpha_1, \alpha_2\} \ne \beta + \delta$, and $\alpha_1 \ne \alpha_2 + 2\delta$. Then the game in \Cref{tab:payoff-matrix} has a unique pure strategy equilibrium if and only if at least one of two conditions is met: $\alpha_1 > \alpha_2 + 2\delta$ or $\alpha_2 < \beta + \delta$.
Specifically, the unique equilibrium is $(S,G)$ if $\alpha_1 > \beta + \delta$ and $(G,G)$ otherwise. If neither condition is met, then $(S,G)$ and $(G,S)$ are both pure strategy equilibria.
\end{thm}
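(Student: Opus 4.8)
The plan is to prove this by direct enumeration of the four pure strategy profiles, checking in each case whether both players are simultaneously best-responding. First I would extract the four relevant best-response comparisons from \Cref{tab:payoff-matrix}. For player 1 facing $Z_2 = S$, comparing the payoffs $\alphabar - \delta$ and $\alpha_2$ reduces (using $\alphabar = \tfrac{1}{2}(\alpha_1+\alpha_2)$, so $\alphabar - \alpha_2 = \tfrac{1}{2}(\alpha_1-\alpha_2)$) to a preference for $S$ exactly when $\alpha_1 > \alpha_2 + 2\delta$; facing $Z_2 = G$, comparing $\alpha_1 - \delta$ against $\beta$ gives a preference for $S$ exactly when $\alpha_1 > \beta + \delta$. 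By the same arithmetic for player 2, against $Z_1 = G$ it prefers $S$ exactly when $\alpha_2 > \beta + \delta$, while against $Z_1 = S$ it compares $\alphabar - \delta$ to $\alpha_1$ and, because $\alpha_1 > \alphabar$ by the WLOG assumption, \emph{always} strictly prefers $G$. The non-degeneracy hypotheses $\{\alpha_1,\alpha_2\} \ne \beta+\delta$ and $\alpha_1 \ne \alpha_2 + 2\delta$ ensure each comparison is strict, so best responses are unique and no boundary ties create spurious equilibria.

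Next I would read off which profiles are equilibria. The fact that player 2 always prefers $G$ against $Z_1 = S$ immediately eliminates $(S,S)$: it is never an equilibrium. For the remaining three, $(S,G)$ is an equilibrium iff $\alpha_1 > \beta+\delta$; $(G,G)$ is an equilibrium iff $\alpha_1 < \beta+\delta$ and $\alpha_2 < \beta+\delta$; and $(G,S)$ is an equilibrium iff $\alpha_1 < \alpha_2 + 2\delta$ and $\alpha_2 > \beta+\delta$.

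The key simplifying step is the implication $\alpha_2 > \beta+\delta \Rightarrow \alpha_1 > \beta+\delta$, which holds since $\alpha_1 > \alpha_2$. This collapses the $(G,G)$ condition to simply $\alpha_1 < \beta+\delta$, and, crucially, shows that whenever $(G,S)$ is an equilibrium, so is $(S,G)$. With these facts the result follows by case analysis on the two named conditions. If either $\alpha_1 > \alpha_2 + 2\delta$ or $\alpha_2 < \beta+\delta$ holds, then $(G,S)$ fails one of its two requirements and is excluded; the only surviving candidates are $(S,G)$ and $(G,G)$, and since $\alpha_1 \ne \beta+\delta$ exactly one of the mutually exclusive conditions $\alpha_1 > \beta+\delta$ and $\alpha_1 < \beta+\delta$ holds, giving a unique equilibrium, namely $(S,G)$ if $\alpha_1 > \beta+\delta$ and $(G,G)$ otherwise. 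Conversely, if neither condition holds, i.e. $\alpha_1 < \alpha_2 + 2\delta$ and $\alpha_2 > \beta+\delta$, then $(G,S)$ is an equilibrium and by the implication above $(S,G)$ is too, yielding the two-equilibrium case.

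I expect the main obstacle to be organizing the inequalities so that the biconditional is transparent rather than a tangle of cases; the implication $\alpha_2 > \beta+\delta \Rightarrow \alpha_1 > \beta+\delta$ is what makes this clean, since it both trivializes the $(G,G)$ condition and guarantees that the non-uniqueness regime always pairs $(G,S)$ with $(S,G)$. Everything else is careful but routine bookkeeping, with the non-degeneracy assumptions doing precisely the work of converting weak best-response comparisons into strict ones.
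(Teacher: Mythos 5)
Your proposal is correct and takes essentially the same approach as the paper's proof: both rest on the same four best-response comparisons, the observation that player 2 always strictly prefers $G$ against $Z_1 = S$ (eliminating $(S,S)$), and the implication $\alpha_2 > \beta + \delta \Rightarrow \alpha_1 > \beta + \delta$ (the paper invokes its contrapositive). The only difference is organizational --- you characterize each profile's equilibrium condition up front and then case-split on the two named conditions, whereas the paper case-splits on $\alpha_1 \lessgtr \alpha_2 + 2\delta$ first --- and this is cosmetic rather than substantive.
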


The proof is in \Cref{supp:theory}, which also includes a generalization to $n$-player games. The strategy space is illustrated in \Cref{fig:equilibria} in \Cref{sec:equilibria}. 
The analysis sheds light on what the information-provision game can teach, and provides guidance on how to design the game to achieve the desired results. We are most interested in settings where $(S,G)$ is the unique equilibrium, so that the players' actions are informative of their relative accuracies. This will occur if the players have search tools that are very effective on some questions, $\alpha_i(x) \gg \beta(x)$, with $\alpha_i(x)$ and $\beta(x)$ indicating the probability of answering correctly on a given question $x$,\footnote{These conditional probability models are assumed to be learned by minimizing regret.}
and when the tools are \emph{complementary} in the sense that if $\alpha_1(x) \gg \beta(x)$ then $\alpha_1(x) \gg \alpha_2(x)$, so that it is clear which tool to use. If $\alpha_2(x) \approx \beta(x)$ then the agents will learn to search precisely when it is better than guessing. In the next section we describe a game that approximately meets these criteria.

\section{Experimental setup}
\label{sec:eval-setup}

We evaluate whether collaborative self-play can teach agents when (a) they can answer confidently from parametric knowledge; (b) they have a tool that is likely to yield helpful evidence; or (c) they cannot answer confidently and must therefore hedge. 

\newcommand{\wikiagent}[0]{\textsc{Wiki-bm25}}
\newcommand{\pubmedagent}[0]{\textsc{PubMed-Gecko}}

\textbf{Evaluation details.}
Our evaluation focuses on a practical special case of  collaborative self-play (defined in \Cref{sec:csp-orchestration}), using the Gemma2-9b base model~\citep{team2024gemma}.
\begin{itemize}[leftmargin=*, itemsep=2pt]
    \item \textbf{Helper agents}: \wikiagent, which can search Wikipedia using BM-25~\cite{Robertson2009ThePR}; \pubmedagent, which can search PubMed abstracts using dense embeddings from Gecko~\cite{lee2024gecko}. When using the \action{SEARCH} tool, the agents must pass along the query verbatim. Helper agents cannot issue an \action{ASK} action.
    \item \textbf{Communication}: An asker agent that has no tool access passes a query to the helper agents by issuing an \action{ASK} action; their responses enter the evidence section of the asker's prompt as either \action{FRIEND\_ANSWER} or \action{FRIEND\_HEDGE}. For \action{ASK} and \action{SEARCH} the argument is copied from the last turn, while for \action{ANSWER} and \action{HEDGE} it is generated by the model.
    \item \textbf{Data}: Short-answer questions from BioASQ~\cite{krithara2023bioasq} and PopQA~\cite{mallen-etal-2023-trust}. BioASQ is a manually-curated corpus of biomedical questions along with gold answers. PopQA is an open-domain question answering dataset with questions on entities of various levels of popularity from Wikipedia. We choose these two benchmarks guided by the analysis from \Cref{sec:expert-society-discussion}, as they have long-tail questions that are less likely to be in the model's parameteric knowledge, and thus the advantage of tool use should be considerable for some questions.\footnote{In PopQA, we filter out questions with higher than median annotated popularity.}
    \item \textbf{Out-of-distribution evaluation}: We evaluate on the dev sets of two additional benchmarks: Natural Questions~\citep[NQ;][]{kwiatkowski-etal-2019-natural} and EntityQuestions~\citep[EntQ;][]{sciavolino-etal-2021-simple}. Both contain questions answerable from Wikipedia. However, NQ questions focus on common entities, so are likely answerable from parametric knowlege, while EntQ focuses more on tail knowledge that is likely to require retrieval.\looseness=-1
    \item \textbf{Reward}:
    Effort-penalized $F_1$ (comparing the set of tokens in the gold and predicted answers), with a penalty of $\delta$ for each tool use call. The penalty is set to a low value to serve as a tie-breaker between rollouts with the same F$_1$.
\end{itemize}
An example rollout in this setting is shown in \Cref{fig:example_rollout}; the full prompts are in \Cref{app:prompt}.

\begin{SCfigure}
    \centering
    \begin{minipage}[!t]{.55\textwidth}
    \pgfplotsset{compat=1.11,
    /pgfplots/ybar legend/.style={
    /pgfplots/legend image code/.code={%
       \draw[##1,/tikz/.cd, rotate=90]
        (-0.1cm,-0.1cm) rectangle (6pt,3pt);},
   },
}
\begin{tikzpicture}
\begin{axis}[
    width=\linewidth,
    height=.6\linewidth,
    ylabel={$F_1$},
    xlabel={},
    xtick=data,
    xticklabels={pubmed, wiki, pubmed, wiki},
    xticklabel style={align=center},
    ymin=0, ymax=0.8,
    xmin=-0.5, xmax=3.5,
    legend style={at={(1,0.9)},anchor=north east},
    ybar=5pt,
    bar width=15pt,
    axis lines = left,
    ylabel style={align=center, inner xsep=0pt, outer xsep=0pt},
    nodes near coords,
    nodes near coords style={xshift=0cm,font=\tiny,color=black,/pgf/number format/.cd,fixed zerofill,precision=2,/tikz/.cd},
    every node near coord/.append style={font=\tiny},
    label style={font=\footnotesize},
    tick label style={font=\scriptsize},
    legend style={font=\scriptsize},
    x tick label style={rotate=0,anchor=north},
    ymajorgrids = true,
    grid style={line width=.1pt, draw=gray!50},  %
        after end axis/.code={
        \draw[thick, solid, gray] (axis cs:1.5,0) -- (axis cs:1.5,0.65);  %
    }
]
\pgfplotsinvokeforeach{w/o search}{ 
\addplot [fill=teal!70!blue] coordinates {
(0, 0.413) (1, 0.412) (2, 0.158) (3, 0.163) };
\addlegendentry{#1};
}
\pgfplotsinvokeforeach{with search}{ 
\addplot [fill=gray!20] coordinates {
(0, 0.505) (1, 0.365) (2, 0.126) (3, 0.368) };
\addlegendentry{#1};
}

\node[above,align=center,font=\footnotesize] at (axis cs:0.05,0.68) {BioASQ};
\node[above,align=center,font=\footnotesize] at (axis cs:2.05,0.68) {PopQA};

\end{axis}
\end{tikzpicture}
    \end{minipage}
    \caption{\textbf{Mean $F_1$ of the prompted model for each helper agent}. The tool-augmented agents have complementary knowledge, satisfying the conditions for collaborative self-play calibration described in \Cref{sec:expert-society-discussion}.}
    \label{fig:agent-training-f1}
    \vspace{-10pt}
\end{SCfigure}

Why does this implementation of collaborative self-play teach the capabilities enumerated above? As argued in \Cref{sec:expert-society-discussion}, this setup will incentivize calibration of the \action{ANSWER} and \action{HEDGE} actions if the agents have complementary knowledge, because using the correct action is necessary to enable the asker to choose effectively between conflicting answers provided by the helpers. As shown in \Cref{fig:agent-training-f1}, the agents do indeed have complementary knowledge, with significant gaps in $F_1$ across the two datasets. Furthermore, the prompted Gemma2-9b model attends to the \action{ANSWER}/\action{HEDGE} distinction: in 80\% of rollouts where it receives an \action{ANSWER} and a \action{HEDGE} from its helper agents, it passes along the more confident answer.
There is also an incentive for efficient tool use: if the agents can obtain an accurate answer parametrically, they will achieve the highest reward by doing so. They should make tool calls only if they are likely to yield good evidence, or they will incur an effort penalty without improving their likelihood of correctness. 

\textbf{Models and baselines.}
We compare the following supervision strategies:
\begin{itemize}[leftmargin=*,itemsep=2pt]
    \item \textbf{In-context learning (ICL)}. We use the base gemma2-9B model, with one-shot-per-action prompt shown in \Cref{app:prompt}.
    \item \textbf{Collaborative self-play ReST (CSP)}. We generate rollouts from the three-agent society and train as in \Cref{alg:multi-agent-rest}, with $\tau=0.1, n_s=2000, n_r=32$.\footnote{Pilot experiments with $\tau=0.5, n_s\in\{1000, 5000\}, n_r \in \{24, 100\}$ yielded broadly similar results.} Hedging and search are learned only from inter-agent communication.
    \item \textbf{Deanonymized CSP ReST (CSP-DeAnon)}. Similar to CSP, but the asker agent knows the identities of the helpers. As an example, in the last turn of~\Cref{fig:example_rollout} we replace \texttt{FRIEND\_ANSWER} with \texttt{FRIEND\_ANSWER (wiki)}, and \texttt{FRIEND\_HEDGE} with \texttt{FRIEND\_HEDGE (pubmed)}.
    This means that the asker can learn which helper is likely to give a correct answer for a particular question (ignoring any of the helper's confidence markers). This reduces the incentive for the helper agents to learn to hedge in CSP-DeAnon, which we predict will reduce the calibration of $P(\action{ANSWER})$.\looseness=-1
\item    \textbf{Action supervision}. We generate rollouts from the prompted base model, with no inter-agent communication. We then construct a silver label for the optimal action according to a calibration-based objective. Specifically, for each question, we take the max-reward rollout that ended with \action{ANSWER}. If the $F_1 > 0.5$, we train on the steps of this rollout; otherwise, we take the highest-reward rollout that ended with \action{HEDGE}, and train on its actions if its $F_1 < 0.5$, excluding the answer. Action supervision directly maps \action{ANSWER} to high-$F_1$ rollouts and \action{HEDGE} to low-F$_1$ rollouts, while CSP-ReST relies on task completion alone. \looseness=-1
\end{itemize}

The evaluation focuses on a \emph{single-agent deployment scenario} with \action{ASK} disabled, testing the ability of multi-agent supervision to transfer to single-agent policies.
Thus, we run the model in a single-agent setup, where the agent chooses between \action{ANSWER}, \action{HEDGE}, and \action{SEARCH}. The goal is to use \action{ANSWER} when it is possible to answer from parametric knowledge, \action{SEARCH} when it is likely to be helpful, and \action{HEDGE} when unsure.

\begin{table}
\small\centering
\begin{tabular}{llrrrrrrrr}
\toprule
          &       & \multicolumn{4}{c}{task-level $F_1\, \uparrow$}& \multicolumn{4}{c}{search rate $\downarrow$} \\
\cmidrule(lr){3-6} \cmidrule(lr){7-10}
setting   & agent &  bioasq &  popqa &     nq &   entq &  bioasq &  popqa &     nq &   entq \\
\midrule
ICL       & \textsc{PubMed} &   0.577 &  0.112 &  0.267 &  0.212 &   0.984 &  0.977 &  0.967 &  0.980 \\
          & \textsc{Wiki}   &   0.423 &  0.338 &  0.407 &  0.521 &   0.972 &  0.995 &  0.978 &  0.994 \\[5pt]
CSP       & \textsc{PubMed} &   0.568 &  0.235 &  0.377 &  0.329 &   0.228 &  0.191 &  0.104 &  0.126 \\
          & \textsc{Wiki}   &   0.538 &  0.321 &  0.404 &  0.434 &   0.182 &  0.480 &  0.205 &  0.362 \\[5pt]
CSP-      & \textsc{PubMed} &   0.576 &  0.234 &  0.386 &  0.349 &   0.251 &  0.091 &  0.059 &  0.052 \\
DeAnon    & \textsc{Wiki}   &   0.537 &  0.325 &  0.391 &  0.449 &   0.140 &  0.499 &  0.190 &  0.344 \\[5pt]
Act. Sup. & \textsc{PubMed} &   0.568 &  0.243 &  0.375 &  0.332 &   0.180 &  0.039 &  0.063 &  0.060 \\
          & \textsc{Wiki}   &   0.542 &  0.293 &  0.395 &  0.373 &   0.136 &  0.188 &  0.092 &  0.131 \\[5pt]
Oracle    & \textsc{PubMed} &   0.690 &  0.256 &  0.428 &  0.384 &   0.294 &  0.048 &  0.079 &  0.075 \\
          & \textsc{Wiki}   &   0.606 &  0.427 &  0.537 &  0.619 &   0.175 &  0.248 &  0.209 &  0.379 \\ 
\bottomrule
\end{tabular}

\caption{\textbf{Task performance and effort.} On in-distribution data (bioasq and popqa), collaborative self-play (CSP) achieves higher or similar task-level $F_1$ compared to in-context learning (ICL), despite using 2-5x fewer search calls. The action-supervised method is able to further reduce search rates  by training directly on a calibration-based objective, but does not improve the task-level $F_1$ significantly. On EntQ, an OOD dataset, Action Supervision searches too little, reducing task performance.}
\label{tab:task_results}
\end{table}

Action Supervision is more similar to standard calibration strategies~\citep[e.g.,][]{Lin2022TeachingMT,Zhang2023RTuningIL}, where we teach the model to be confident when it is correct and unsure when is wrong. Its training is closely aligned with the evaluation procedure (see \Cref{sec:eval-metrics}),
heuristically splitting \action{ANSWER} vs \action{HEDGE} on ${F_1 > 0.5}$. Conversely, CSP does not prescribe any specific actions, relying on the emergent training signal from task completion alone. Thus Action Supervision can be regarded as an upper bound on the calibration performance attainable by CSP on these tasks.

\textbf{Inference cost to generate training data.} Generating the training data required 64,000 rollouts per round of ReST training: 1000 queries per dataset, 32 rollouts per query, two datasets), with three to five inference calls per rollout. %
Empirically, each ReST epoch produced $< 2$M whitespace-delimited output tokens, from $<180$M whitespace-delimited input tokens. Over three epochs and with an approximate multiplier of 1.3 subword tokens per whitespace token, we estimate the total cost of replicating the training data generation at less than $\$150$ US, based on third-party prices for gemma2-9b inference.\footnote{\url{https://groq.com/pricing/}, showing US\$0.20 per million tokens on August 8, 2025.}

\section{Results}
\label{sec:eval-metrics}

We evaluate the learned agents on three dimensions: (1) whether they can reliably obtain accurate answers at minimal cost; (2) whether their use of the search tool is calibrated to its helpfulness; (3) whether their use of \action{ANSWER} vs \action{HEDGE} is calibrated to correctness. Here we report results after three ReST epochs; for per-epoch results see \Cref{sec:rest_epoch_results}.\looseness=-1

\textbf{Task performance.}
To measure task performance, we run each helper agent (PubMED-Gecko and Wiki-BM25) on held-out questions from each dataset, and report token-level $F_1$. Results are shown in \Cref{tab:task_results}. The most prominent distinction between the methods is that in-context learning (ICL) nearly always searches, while the three supervised methods search less than half of the time in most settings --- usually much less. As a result, ICL yields poor performance when applied to an agent whose tool is not suited for the dataset (e.g., Wiki/BioASQ, PubMed/PopQA), with $F_1$ gaps of more than 10 points from the other techniques. Among the supervised methods, $F_1$ results are broadly similar, with Action Supervision searching at the lowest rate due to its calibration-based objective. However, this low search rate hurts Action Supervision on the EntQ dataset: on Wiki/EntQ it trails CSP by $6.1 \, F_1$. Conversely, ICL performs especially well on this dataset, because of its high search rate ($>99\%$). The deanonymization ablation has relatively little impact on task performance, except that CSP-DeAnon is able to reduce the search rate in mismatched settings.\looseness=-1

The table also shows the performance of a \emph{test-time} oracle, which uses the retrieval tool on exactly those instances where it improves $F_1$. This shows that there remains $10$ points of $F_1$ headroom for in-domain agents at slightly higher search rates.

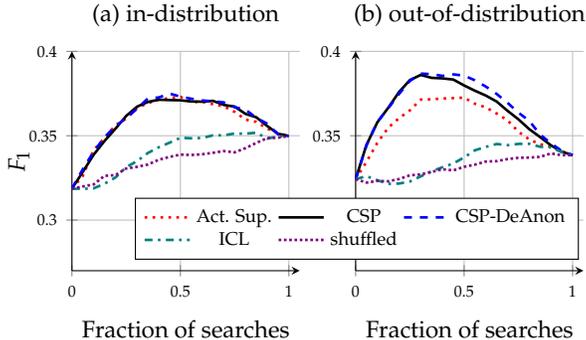
\begin{SCfigure}[50]
\begin{minipage}[!t]{0.6\textwidth}
  \pgfplotsset{
      tick label style={font=\tiny},
      label style={font=\footnotesize},
      legend style={font=\tiny}
   }
    \centering
    \begin{tikzpicture}
\pgfplotsset{
  common axis style/.style={ %
    width=.55\linewidth,     %
    height=4.3cm,    %
    grid=major,             %
    xlabel={Fraction of searches},        %
    xmin=0,
    xmax=1.05,
    ymin=0.27,
    ymax=0.4,
    axis lines=left
  },
  title style={font=\small}
}
\begin{scope}
\begin{axis}[
common axis style,
ylabel=$F_1$, 
ylabel style={yshift=-1.5em},
legend style={
 at={(0.05,0.05)},
 anchor=south west,
 font=\scriptsize,
 row sep=-2pt}, 
legend columns=3,
title={(a) in-distribution}]
\addplot[line width=1pt, color=red, dotted] coordinates {
(0.0, 0.3186254227550711) (0.05, 0.32611388699026567) (0.1, 0.3414376682953889) (0.15, 0.34863861833408233) (0.2, 0.3560827407555117) (0.25, 0.3602133271470521) (0.3, 0.36644237637738053) (0.35, 0.36930098434844766) (0.4, 0.3714138692086855) (0.45, 0.3722358749379425) (0.5, 0.3730516393532658) (0.55, 0.36991592981426674) (0.6, 0.37014312529852106) (0.65, 0.36865837421626996) (0.7, 0.3673926590469364) (0.75, 0.3641275854114216) (0.8, 0.35953432228565846) (0.85, 0.35818998818838316) (0.9, 0.3542214528348478) (0.95, 0.35148423061262557) (1.0, 0.3498613734697685) 
};
\addlegendentry{Act. Sup.}
\addplot[line width=1pt, color=black, solid] coordinates {
(0.0, 0.3186254227550711) (0.05, 0.32852386112604665) (0.1, 0.3387084209967779) (0.15, 0.3468390411058798) (0.2, 0.35406331095991284) (0.25, 0.36169695496090726) (0.3, 0.36720854956139515) (0.35, 0.37001750301335845) (0.4, 0.3713410763422452) (0.45, 0.3711349008203748) (0.5, 0.37098814597126656) (0.55, 0.37031646306875304) (0.6, 0.3701757963974238) (0.65, 0.3706740297718066) (0.7, 0.36825662596646164) (0.75, 0.3672594115612779) (0.8, 0.36294149345813925) (0.85, 0.36091009429143045) (0.9, 0.3557793316227266) (0.95, 0.35115137346976844) (1.0, 0.3498613734697685) 
};
\addlegendentry{CSP}
\addplot[line width=1pt, color=blue, dashed] coordinates {
(0.0, 0.3186254227550711) (0.05, 0.33008765302897597) (0.1, 0.3405228022755841) (0.15, 0.34904644228958187) (0.2, 0.35563653217922553) (0.25, 0.36039736721896015) (0.3, 0.3659758273123053) (0.35, 0.37205549683773365) (0.4, 0.37266300251437895) (0.45, 0.3749762970636597) (0.5, 0.37316166243498106) (0.55, 0.37193471279964907) (0.6, 0.37089482554466124) (0.65, 0.37044537144538353) (0.7, 0.36975788693642847) (0.75, 0.3668829683313053) (0.8, 0.3654652064118522) (0.85, 0.3605848129061491) (0.9, 0.3571488987222937) (0.95, 0.3508613734697685) (1.0, 0.3498613734697685) 
};
\addlegendentry{CSP-DeAnon}
\addplot[line width=1pt, color=teal, dashdotted] coordinates {
(0.0, 0.3186254227550711) (0.05, 0.31852119060545225) (0.1, 0.31881847599112023) (0.15, 0.32153743657557027) (0.2, 0.3251912336198256) (0.25, 0.3313408014632071) (0.3, 0.3349395477815929) (0.35, 0.33976160891863283) (0.4, 0.3431271694997785) (0.45, 0.34587672143303855) (0.5, 0.3487922737064645) (0.55, 0.3484777024072848) (0.6, 0.3486769993236124) (0.65, 0.3502096863597507) (0.7, 0.3502583869775946) (0.75, 0.3513179155400087) (0.8, 0.3512010409627195) (0.85, 0.3517958801072036) (0.9, 0.3483012585450014) (0.95, 0.34954087166163506) (1.0, 0.3498613734697685) 
};
\addlegendentry{ICL}
\addplot[line width=1pt, color=violet, densely dotted] coordinates {
(0.0, 0.3186254227550711) (0.05, 0.31989867405098) (0.1, 0.32111780182279087) (0.15, 0.3262250577477664) (0.2, 0.32661837221853085) (0.25, 0.33057437331313766) (0.3, 0.3315279540633595) (0.35, 0.3329233111188788) (0.4, 0.3357270867314) (0.45, 0.33747120304651435) (0.5, 0.3388261520698724) (0.55, 0.33858279142528824) (0.6, 0.33900533689005097) (0.65, 0.3398973107981714) (0.7, 0.34088151303465714) (0.75, 0.3401474912099517) (0.8, 0.3426497001813021) (0.85, 0.34575740043367403) (0.9, 0.3485614382898318) (0.95, 0.3488096796937146) (1.0, 0.3498613734697685) 
};
\addlegendentry{shuffled}
\legend{};
\end{axis}
\end{scope}
\begin{scope}[xshift=.45\linewidth]
\begin{axis}[
common axis style,
title={(b) out-of-distribution},
legend style={at={(0.98,0.05)},anchor=south east,font=\scriptsize,row sep=-2pt}, legend columns=3]
\addplot[line width=1pt, color=red, dotted] coordinates {
(0.0, 0.32427637036166446) (0.05, 0.33458866248485813) (0.1, 0.3457531689587764) (0.15, 0.35470700730739024) (0.2, 0.36041893718912565) (0.25, 0.365020664993485) (0.3, 0.3713963323816524) (0.35, 0.37159283865315873) (0.4, 0.3720450194824175) (0.45, 0.3722978704034263) (0.5, 0.3725685320955354) (0.55, 0.36893395037823307) (0.6, 0.3667998220066047) (0.65, 0.36315153002081274) (0.7, 0.35914936551864823) (0.75, 0.3524017464710292) (0.8, 0.34782164609754646) (0.85, 0.3449433341317345) (0.9, 0.34360861190951225) (0.95, 0.3407800099559103) (1.0, 0.3384858180367184) 
};
\addlegendentry{Act. Sup.}
\addplot[line width=1pt, color=black, solid] coordinates {
(0.0, 0.32427637036166446) (0.05, 0.34458510607040016) (0.1, 0.35794214739863717) (0.15, 0.36652533365682344) (0.2, 0.3758980426063706) (0.25, 0.3829783348712952) (0.3, 0.38602774581982385) (0.35, 0.3843066492508857) (0.4, 0.3839961403081862) (0.45, 0.3832859655405733) (0.5, 0.37979718757876907) (0.55, 0.3766495309354468) (0.6, 0.3737993783102942) (0.65, 0.3702603289910204) (0.7, 0.36477079817869545) (0.75, 0.3597544291310764) (0.8, 0.353898397948207) (0.85, 0.3492237433630261) (0.9, 0.34431898145826423) (0.95, 0.340526005626906) (1.0, 0.3384858180367184) 
};
\addlegendentry{CSP}
\addplot[line width=1pt, color=blue, dashed] coordinates {
(0.0, 0.32427637036166446) (0.05, 0.34426194288093265) (0.1, 0.3582498494188392) (0.15, 0.3662161286869567) (0.2, 0.3747512570389968) (0.25, 0.3824748889626287) (0.3, 0.38681766833195225) (0.35, 0.38651112817018757) (0.4, 0.38545981288459596) (0.45, 0.3863453777235654) (0.5, 0.38564654159592493) (0.55, 0.38280671463152366) (0.6, 0.37912928274968005) (0.65, 0.3747476144180117) (0.7, 0.36953280755820483) (0.75, 0.3637012352497943) (0.8, 0.35884197853428756) (0.85, 0.35074540566271467) (0.9, 0.34543053989982264) (0.95, 0.3410258432892437) (1.0, 0.3384858180367184) 
};
\addlegendentry{CSP-DeAnon}
\addplot[line width=1pt, color=teal, dashdotted] coordinates {
(0.0, 0.32427637036166446) (0.05, 0.32600123049894725) (0.1, 0.3236427015365096) (0.15, 0.321490987972296) (0.2, 0.321641685388597) (0.25, 0.3228808180696414) (0.3, 0.3259151194708546) (0.35, 0.3288540680223031) (0.4, 0.33132182249005765) (0.45, 0.3335303295467264) (0.5, 0.33715313957983056) (0.55, 0.3413841858062897) (0.6, 0.34337393433390595) (0.65, 0.3450969548075843) (0.7, 0.34440579410554123) (0.75, 0.34515256954458445) (0.8, 0.345196803088818) (0.85, 0.34345593007294495) (0.9, 0.3430522414739623) (0.95, 0.34029517395857434) (1.0, 0.3384858180367184) 
};
\addlegendentry{ICL}
\addplot[line width=1pt, color=violet, densely dotted] coordinates {
(0.0, 0.32427637036166446) (0.05, 0.32220348418276806) (0.1, 0.3232028237321076) (0.15, 0.32298059695303877) (0.2, 0.3243219832713987) (0.25, 0.32654444781665737) (0.3, 0.3270595298837131) (0.35, 0.32853790567458896) (0.4, 0.33013690451888195) (0.45, 0.3294386326742391) (0.5, 0.3315797227769468) (0.55, 0.33287791894352536) (0.6, 0.33469552071310077) (0.65, 0.33493304299474075) (0.7, 0.3361371160050638) (0.75, 0.3369091392024546) (0.8, 0.3370502356361207) (0.85, 0.3374591917162724) (0.9, 0.33949828297631957) (0.95, 0.33847337451618664) (1.0, 0.3384858180367184) 
};
\addlegendentry{shuffled}
\end{axis}
\end{scope}
\end{tikzpicture}
\end{minipage}
\caption{\textbf{Calibration of $P(\action{SEARCH})$.} For each method, held-out queries are sorted by $P(\action{SEARCH})$; in `shuffled' they are randomly shuffled. For the top $x$\% of queries, retrieval is used; for the remainder, the model must respond parametrically. Well-calibrated tool users can achieve significant boosts in $F_1$ from even rare use of  \action{SEARCH}.\looseness=-1}
\label{fig:search_calibration}
\end{SCfigure}

\textbf{Search calibration.}
\label{sec:search_calib}
A key capability that we hope to teach from collaborative self-play is when to retrieve and when to rely on parametric knowledge. To measure this, we sort questions by $P(\action{SEARCH})$ according to each model. At each quantile threshold $\tau$, we apply retrieval to all questions where $P(\action{SEARCH}) > \tau$, and answer the remaining questions parametrically. To focus on evaluation of $P(\action{SEARCH})$ specifically, the answers themselves are drawn from the base model rather than the finetuned model. This isolates the impact of search calibration from other aspects of the question-answering capability.

In-distribution results (BioASQ and PopQA) are shown in \Cref{fig:search_calibration} (a).
On the far left, search is applied only to a very small number of queries; on the far right it is applied to every query. Agents with strong search calibration will display a rapid increase in mean $F_1$ as the fraction of searches increases from zero, and will show a decrease in $F_1$ as we approach the far right side of figure, because irrelevant retrievals makes QA less accurate (see \Cref{fig:agent-training-f1}). As shown in the figure, all trained models are better calibrated than ICL, which barely outperforms a baseline that randomly shuffles the questions.

Out-of-distribution results (NQ and EntQ) are shown in \Cref{fig:search_calibration} (b). Here, we see that Action Supervision is less robust than Collaborative Self-Play (CSP), with worse generalization to the OOD setup. CSP-DeAnon slightly outperforms CSP at relatively high search rates, but will be shown to have much lower $P(\action{ANSWER})$ calibration.

\begin{SCfigure}[50]
\begin{minipage}[!t]{0.6\textwidth} %
  \pgfplotsset{
      tick label style={font=\tiny}, %
      label style={font=\footnotesize},    %
      legend style={font=\tiny}   %
   }
    \centering
    \begin{tikzpicture}
\pgfplotsset{
  common axis style/.style={ %
    width=.52\linewidth,     %
    height=4.1cm,    %
    grid=major,             %
    xlabel={Frac. questions answered},        %
    xmin=0,
    xmax=1.05,
    ymin=0.25,
    ymax=1,
    axis lines=left
  },
  title style={font=\small}
}
\begin{scope}
\begin{axis}[common axis style,ylabel={Parametric $F_1$},title={(a) in-distribution},
ylabel style={yshift=-1.5em},
]
\addplot[line width=1pt, color=red, dotted] coordinates {
(0.050, 0.970) (0.100, 0.919) (0.150, 0.891) (0.200, 0.821) (0.250, 0.747) (0.300, 0.695) (0.350, 0.648) (0.400, 0.609) (0.450, 0.578) (0.500, 0.547) (0.550, 0.514) (0.600, 0.483) (0.650, 0.456) (0.700, 0.434) (0.750, 0.413) (0.800, 0.393) (0.850, 0.373) (0.900, 0.353) (0.950, 0.335) (1.000, 0.319) 
};
\addlegendentry{Act. Sup.}
\addplot[line width=1pt, color=black, solid] coordinates {
(0.050, 0.845) (0.100, 0.792) (0.150, 0.742) (0.200, 0.694) (0.250, 0.651) (0.300, 0.619) (0.350, 0.590) (0.400, 0.564) (0.450, 0.538) (0.500, 0.509) (0.550, 0.485) (0.600, 0.461) (0.650, 0.439) (0.700, 0.419) (0.750, 0.401) (0.800, 0.382) (0.850, 0.366) (0.900, 0.349) (0.950, 0.333) (1.000, 0.319) 
};
\addlegendentry{CSP}
\addplot[line width=1pt, color=blue, dashed] coordinates {
(0.050, 0.859) (0.100, 0.690) (0.150, 0.578) (0.200, 0.509) (0.250, 0.463) (0.300, 0.424) (0.350, 0.392) (0.400, 0.371) (0.450, 0.353) (0.500, 0.337) (0.550, 0.329) (0.600, 0.328) (0.650, 0.326) (0.700, 0.322) (0.750, 0.320) (0.800, 0.319) (0.850, 0.320) (0.900, 0.319) (0.950, 0.318) (1.000, 0.319) 
};
\addlegendentry{CSP-DeAnon}
\addplot[line width=1pt, color=teal, dashdotted] coordinates {
(0.050, 0.756) (0.100, 0.671) (0.150, 0.583) (0.200, 0.526) (0.250, 0.497) (0.300, 0.466) (0.350, 0.436) (0.400, 0.416) (0.450, 0.403) (0.500, 0.385) (0.550, 0.370) (0.600, 0.354) (0.650, 0.347) (0.700, 0.339) (0.750, 0.330) (0.800, 0.323) (0.850, 0.318) (0.900, 0.315) (0.950, 0.313) (1.000, 0.310) 
};
\addlegendentry{ICL}
\legend{};
\end{axis}
\end{scope}
\begin{scope}[xshift=.45\linewidth]
\begin{axis}[common axis style,ylabel={}, legend style={at={(1.1,0.98)},ymin=0.2,ymax=1.0,anchor=north east,font=\scriptsize,row sep=-3pt}, 
legend columns=1, 
legend image post style={
            scale=0.4 %
            },
            title={(b) out-of-distribution}]
\addplot[line width=1pt, color=red, dotted] coordinates {
(0.050, 0.810) (0.100, 0.721) (0.150, 0.663) (0.200, 0.624) (0.250, 0.586) (0.300, 0.562) (0.350, 0.527) (0.400, 0.501) (0.450, 0.482) (0.500, 0.459) (0.550, 0.443) (0.600, 0.424) (0.650, 0.415) (0.700, 0.400) (0.750, 0.386) (0.800, 0.375) (0.850, 0.362) (0.900, 0.351) (0.950, 0.339) (1.000, 0.324) 
};
\addlegendentry{Act. Sup.}
\addplot[line width=1pt, color=black, solid] coordinates {
(0.050, 0.731) (0.100, 0.685) (0.150, 0.655) (0.200, 0.605) (0.250, 0.562) (0.300, 0.532) (0.350, 0.507) (0.400, 0.489) (0.450, 0.471) (0.500, 0.450) (0.550, 0.438) (0.600, 0.419) (0.650, 0.404) (0.700, 0.393) (0.750, 0.382) (0.800, 0.373) (0.850, 0.362) (0.900, 0.349) (0.950, 0.337) (1.000, 0.324) 
};
\addlegendentry{CSP}
\addplot[line width=1pt, color=blue, dashed] coordinates {
(0.050, 0.571) (0.100, 0.516) (0.150, 0.486) (0.200, 0.448) (0.250, 0.425) (0.300, 0.423) (0.350, 0.402) (0.400, 0.395) (0.450, 0.389) (0.500, 0.374) (0.550, 0.369) (0.600, 0.363) (0.650, 0.359) (0.700, 0.351) (0.750, 0.347) (0.800, 0.343) (0.850, 0.337) (0.900, 0.331) (0.950, 0.330) (1.000, 0.324) 
};
\addlegendentry{CSP-DeAnon}
\addplot[line width=1pt, color=teal, dashdotted] coordinates {
(0.050, 0.676) (0.100, 0.637) (0.150, 0.606) (0.200, 0.564) (0.250, 0.544) (0.300, 0.520) (0.350, 0.490) (0.400, 0.470) (0.450, 0.456) (0.500, 0.438) (0.550, 0.422) (0.600, 0.406) (0.650, 0.393) (0.700, 0.381) (0.750, 0.370) (0.800, 0.358) (0.850, 0.350) (0.900, 0.341) (0.950, 0.333) (1.000, 0.324) 
};
\addlegendentry{ICL}
\end{axis}
\end{scope}
\end{tikzpicture}
\end{minipage}
    \caption{\textbf{Calibration of P(\action{ANSWER}) vs P(\action{HEDGE}).} For each method, held-out queries are sorted by $P(\action{ANSWER})/(P(\action{ANSWER})+P(\action{HEDGE}))$. For the top $x$\% of queries, we compute the mean $F_1$ of the prompted parametric model. Deanonymization significantly reduces calibration. %
    \looseness=-1}
    \label{fig:answer_calibration}
    \vspace{-0pt}
    \end{SCfigure}

\textbf{Answer calibration.}
\label{sec:answer_calib}
Finally, we explore calibration with respect to parametric knowledge. To do this, we compare the probabilities of answering and guessing without search, sorting by $P(\action{ANSWER})/(P(\action{ANSWER})+P(\action{HEDGE}))$. Note that this form of calibration may not emerge from the collaborative self-play game, because the agents have identical parametric knowledge, and can only obtain an advantage by using their corpus-specific retrieval tools. On the other hand, action supervision is expected to teach this capability, because the model is directly trained to use \action{ANSWER} when it can produce a high $F_1$. 

As shown in \Cref{fig:answer_calibration}, Action Supervision does indeed yield the best calibration of $P(\action{ANSWER})$. CSP is slightly behind, but substantially outperforms both ICL and CSP-DeAnon. The gap between CSP and CSP-DeAnon validates the intuition and theory presented above: a small change in the prompt (revealing the identity of the helper) leads to a large gap in calibration, because the change breaks the mechanism linking calibration to performance on the collaborative self-play task. On the OOD benchmarks, the gap between Action Supervision, CSP, and ICL is small, with CSP-DeAnon trailing significantly.

\begin{SCtable}[50]
  \centering
  \small
  \caption{Spearman correlations between P(\action{ANSWER}) and parametric $F_1$, quantifying the relationship shown in \Cref{fig:answer_calibration}. Anonymization is the key feature of the mechanism by which calibration is learned from collaborative self-play.}
  \label{tab:answer_calibration_corr}
  \begin{tabular}{lrr}
    \toprule
    \textbf{Setting} & \textbf{In-dist.} & \textbf{OOD} \\
    \midrule
    CSP                & 0.5363 & 0.3663 \\
    CSP-DeAnon         & 0.0696 & 0.1677 \\
    Action Supervision & 0.6637 & 0.3991 \\
    ICL           & 0.2314 & 0.3179 \\
    \bottomrule
  \end{tabular}
\end{SCtable}

\textbf{Learning dynamics.}
\label{sec:learning-dynamics}
Learning dynamics for the three epochs of reinforced self-training (ReST) are shown in \Cref{sec:rest_epoch_results}. To briefly summarize, the search rate decreases consistently throughout training, as the policy works to maintain task performance while incurring lower cost, while task performance remains nearly constant after the first epoch.

\section{Discussion}
This work shows that calibration and efficient tool use can emerge from relatively weak supervision through a multi-agent collaborative game. Given the right incentive structure, agents learn when their corpus-specific retrieval tool can provide useful additional information, and then respond with a binary confidence signal. This suggests that mechanism design and multi-agent interaction can offer an alternative approach for post-training towards language models with stronger interactional capabilities.

\textbf{Limitations.}
This paper considers only retrieval tools, factoid QA tasks, and a single language model. We hope to relax all of these limitations, for example by considering alternative tools (e.g., calculators, code interpreters) and more complex inter-agent interactions (e.g., question decomposition). We are particularly curious to apply CSP to tasks beyond factoid QA, particularly in tasks in which ground truth validators are unavailable. Longer-term, we would like to generalize to the approach other grounding strategies, such as requesting and issuing clarifications.

\bibliography{references}
\bibliographystyle{colm2025_conference}

\newpage
\appendix
\onecolumn

\section{Prompts}
\label{app:prompt}

We attach the full prompts below for the Wiki-BM25 agent (before and after \action{SEARCH} was applied) and for the no-tools agent after receiving answers from the helper agents.

\begin{tcolorbox}%
\begin{Verbatim}[fontsize=\tiny]
You are a helpful agent whose job is to answer a question using verified information.

To answer to the question, you can execute the following actions:

**SEARCH**: search a corpus of a set of passages from Wikipedia pertaining to facts about famous people, locations, and historical events to find relevant information.
**HEDGE**: guess an answer if you think you might know but are not sure and have no way to find out more.
**ANSWER**: directly answer the question when you are confident that you have the correct answer.

Each of these actions takes a string argument, which are:

**SEARCH**: the argument is the search query.
**HEDGE**: the argument is the guess for the question.
**ANSWER**: the argument is the answer to the question.

Here are some examples.

Example: Using **ANSWER** directly.
QUESTION: what is the name of a figure with three sides?
ACTION: **ANSWER**: a triangle

Example: Using **SEARCH** to search a corpus for evidence.
QUESTION: who is the starting center for the Denver Nuggets?
ACTION: **SEARCH**: who is the starting center for the Denver Nuggets?

Example: Using **ANSWER** after getting evidence from searching a corpus.
QUESTION: who is the starting center for the Denver Nuggets?
RETRIEVAL: Nikola Jokic is the starting center for the Denver Nuggets.
RETRIEVAL: Joel Embid is the starting center for the Philadelphia 76ers.
RETRIEVAL: Topeka is the geographical center of the United States.
QUESTION: who is the starting center for the Denver Nuggets?
ACTION: **ANSWER**: Nikola Jokic

Example: Using **HEDGE** to guess an answer when unsure.
QUESTION: what is the name of a figure with eleven sides?
ACTION: **HEDGE**: a endecagon

You should use **SEARCH** if you do not know the answer and think your corpus is likely to contain useful information.
You should use **HEDGE** if you think you might know the answer but are not fully confident, and cannot get more useful information. Do not give hedged answers or say 'I do not know', just make your best guess and mark your confidence by using the **HEDGE** prefix.
You should use **ANSWER** only if you are very confident that you will be correct, based on the evidence you have obtained.

Your answer should be short. For example, if the question is "What is the capital of France?", please answer "Paris", and not "Paris is the capital of France". If you are asked a yes/no question, you may only answer "yes" or "no". Do not give hedged answers like "maybe". Instead, you can use **HEDGE** to indicate low confidence.

Always respond in a single line with the format "ACTION: **<the action>**: <the argument>"

QUESTION: Who is the author of The Girl?
\end{Verbatim}
\end{tcolorbox}

\begin{tcolorbox}%
\begin{Verbatim}[fontsize=\tiny]
You are a helpful agent whose job is to answer a question using verified information.

To answer to the question, you can execute the following actions:

**SEARCH**: search a corpus of a set of passages from Wikipedia pertaining to facts about famous people, locations, and historical events to find relevant information.
**HEDGE**: guess an answer if you think you might know but are not sure and have no way to find out more.
**ANSWER**: directly answer the question when you are confident that you have the correct answer.

Each of these actions takes a string argument, which are:

**SEARCH**: the argument is the search query.
**HEDGE**: the argument is the guess for the question.
**ANSWER**: the argument is the answer to the question.

Here are some examples.

Example: Using **ANSWER** directly.
QUESTION: what is the name of a figure with three sides?
ACTION: **ANSWER**: a triangle

Example: Using **SEARCH** to search a corpus for evidence.
QUESTION: who is the starting center for the Denver Nuggets?
ACTION: **SEARCH**: who is the starting center for the Denver Nuggets?

Example: Using **ANSWER** after getting evidence from searching a corpus.
QUESTION: who is the starting center for the Denver Nuggets?
RETRIEVAL: Nikola Jokic is the starting center for the Denver Nuggets.
RETRIEVAL: Joel Embid is the starting center for the Philadelphia 76ers.
RETRIEVAL: Topeka is the geographical center of the United States.
QUESTION: who is the starting center for the Denver Nuggets?
ACTION: **ANSWER**: Nikola Jokic

Example: Using **HEDGE** to guess an answer when unsure.
QUESTION: what is the name of a figure with eleven sides?
ACTION: **HEDGE**: a endecagon

You should use **SEARCH** if you do not know the answer and think your corpus is likely to contain useful information.
You should use **HEDGE** if you think you might know the answer but are not fully confident, and cannot get more useful information. Do not give hedged answers or say 'I do not know', just make your best guess and mark your confidence by using the **HEDGE** prefix.
You should use **ANSWER** only if you are very confident that you will be correct, based on the evidence you have obtained.

Your answer should be short. For example, if the question is "What is the capital of France?", please answer "Paris", and not "Paris is the capital of France". If you are asked a yes/no question, you may only answer "yes" or "no". Do not give hedged answers like "maybe". Instead, you can use **HEDGE** to indicate low confidence.

Always respond in a single line with the format "ACTION: **<the action>**: <the argument>"

QUESTION: Who is the author of The Girl?
RETRIEVAL: title: Kulpreet Yadav passage: he was awarded the Director General’s Commendation for professionalism and dedication to the nation. He retired voluntarily in the rank of Commandant with the Indian Coast Guard in 2014. Kulpreet lives in Delhi with his wife Seema and daughters Leah and Jeanie. Kulpreet Yadav Kulpreet Yadav is an Indian writer in the fiction-Thriller genre. He is the author of two novels: ""The Girl Who Loved a Pirate"" and ""The Girl Who Loved a Spy"". ""The Girl Who Loved a Pirate"" is India’s first thriller based on marine piracy & hijacking. Kulpreet was born in Chennai and completed graduation in Science
RETRIEVAL: title: Kulpreet Yadav passage: Kulpreet Yadav Kulpreet Yadav is an Indian writer in the fiction-Thriller genre. He is the author of two novels: ""The Girl Who Loved a Pirate"" and ""The Girl Who Loved a Spy"". ""The Girl Who Loved a Pirate"" is India’s first thriller based on marine piracy & hijacking. Kulpreet was born in Chennai and completed graduation in Science from Nowrosjee Wadia College, Pune. He completed his post-graduation in Journalism and Mass Communication from Amity University, Noida in 2004 and Management courses from IIM, Indore and IIM, Lucknow. He joined the Naval Officer’s Academy and served for two decades. In 2007
RETRIEVAL: title: The Simple Girl passage: The film's sets were designed by the art directors Emil Hasler and Paul Markwitz. The film premiered on 23 August 1957 at the Thalia in Wiesbaden. Caterina Bastiani, a talented young actress, is offered the leading role in a musical. This is her big break but the author of the novel on which the musical is based is less than pleased about this adaption — and he does not think much of Caterina. Caterina meets a girl by accident who has applied to work for the author as a maid. She takes the girl's place in order to prove her
RETRIEVAL: title: Peter Leonard (author) passage: is the author of: - Quiver - Trust Me - All He Saw Was The Girl - Voices of the Dead - Back from the Dead (sequel to Voices of the Dead) - Eyes Closed Tight - Unknown Remains Peter Leonard (author) Peter Leonard, the son of Elmore Leonard, is an American author of crime novels. In 1980, Peter was the founding partner of the advertising agency Leonard Mayer & Tocco. For nearly thirty years LM&T created award-winning advertising for Volkswagen of America, Audi of America, Hiram Walker, and Pennzoil. He wrote his first novel, ""Quiver"", in 2007; he has
RETRIEVAL: title: Zoe Strimpel passage: became normalised. Strimpel is the author of ""What the Hell is He Thinking?: All the Questions You've Ever Asked About Men Answered"", which was published in July 2010. It is aimed at providing an insight into men's thinking, researched by Strimpel interviewing men. Her second book, ""The Man Diet: One Woman's Quest to End Bad Romance"" was published on 22 December 2011. Both books received positive reviews from critics and press coverage. Strimpel originally wrote for ""The Times"" as a freelancer. From 2006, she was the author of the ""Girl about town"" column in ""The London Paper"", a now-defunct free
QUESTION: Who is the author of The Girl?
\end{Verbatim}
\end{tcolorbox}

\begin{tcolorbox}%
\begin{Verbatim}[fontsize=\tiny]

You are a helpful agent whose job is to answer a question using verified information.

To answer to the question, you can execute the following actions:

**ANSWER**: directly answer the question when you are confident that you have the correct answer.
**HEDGE**: guess an answer if you think you might know but are not sure and have no way to find out more.
**ASK**: ask your friends if they might know how to answer the question.

Each of these actions takes a string argument, which are:

**ANSWER**: the argument is the answer to the question.
**HEDGE**: the argument is the guess for the question.
**ASK**: the argument is the question you ask your friend.

Here are some examples.

Example: Using **ANSWER** directly.
QUESTION: what is the name of a figure with three sides?
ACTION: **ANSWER**: a triangle

Example: Using **ASK** to ask a friend.
QUESTION: who is the starting center for the Denver Nuggets?
ACTION: **ASK**: who is the starting center for the Denver Nuggets?

Example: Using **ANSWER** after getting evidence from asking a friend.
QUESTION: who is the starting center for the Denver Nuggets?
FRIEND'S ANSWER: Nikola Jokic
QUESTION: who is the starting center for the Denver Nuggets?
ACTION: **ANSWER**: Nikola Jokic

Example: Using **HEDGE** to guess an answer when unsure.
QUESTION: what is the name of a figure with eleven sides?
ACTION: **HEDGE**: a endecagon

You should use **ANSWER** only if you are very confident that you will be correct, based on the evidence you have obtained.
You should use **HEDGE** if you think you might know the answer but are not fully confident, and cannot get more useful information. Do not give hedged answers or say 'I do not know', just make your best guess and mark your confidence by using the **HEDGE** prefix.
You should use **ASK** if you think your friend might have the information you lack.

Your answer should be short. For example, if the question is "What is the capital of France?", please answer "Paris", and not "Paris is the capital of France". If you are asked a yes/no question, you may only answer "yes" or "no". Do not give hedged answers like "maybe". Instead, you can use **HEDGE** to indicate low confidence.

Always respond in a single line with the format "ACTION: **<the action>**: <the argument>"

QUESTION: In which fields of DNA sequencing are Bloom filters applied?
FRIEND'S ANSWER: error analysis, storage optimization
FRIEND'S HEDGE: pattern matching, lossless compression, host species sequence screening, k-mer counting
QUESTION: In which fields of DNA sequencing are Bloom filters applied?
\end{Verbatim}
\end{tcolorbox}

\newpage
\section{Terminology}
\label{sec:csp-terminology}
To clarify the discussion, we offer the following definitions.  Although the general framework does not require agents to be tool users, our application of the framework will require tools, and therefore we include the relevant definitions here.

\begin{defn}
A \textbf{tool} is defined by a keyword and a function $f: \mathcal{V}^* \to \{\mathcal{V}^*\}^*$ from an argument string to a list of output strings.
\end{defn}

For example, one tool might take the keyword \action{SEARCH} and a query, and return a list of snippets of retrieved Wikipedia pages; another tool might take the keyword \action{CALCULATE} and an arithmetic expression, and return an arithmetic result~\citep{parisi2022talm,gao2023palprogramaidedlanguagemodels}.

\begin{defn}
An \textbf{agent} $a \in \mathcal{A}$ is defined by a unique identifier and by a (possibly empty) list of accessible tools.
\end{defn}
While the mapping between agents and tools is potentially many-to-many, we will focus on cases in which each tool is assigned to exactly one agent (e.g., there is only one agent who is able to \action{SEARCH} Wikipedia pages).

\begin{defn}
A \textbf{society} $S \in \mathcal{S}$ is defined by a set of directed pairs of agents, $S = \{(a_{i, 1}, a_{i, 2})_{i=1}^N\}$. If $(a_i, a_j) \in S$ then $a_i$ can pass control and information to $a_j$.
\end{defn}
For example, a society may be fully or sparsely connected, symmetric or asymmetric. The way that control and information between agents is passed along in the society is then handled by a dedicated orchestrator.

\begin{defn}
An \textbf{orchestrator} $O: \mathcal{S} \times \mathcal{H} \to \mathcal{A} \times \mathcal{V}^*$ is a function from a society $S \in \mathcal{S}$ and a history of prompts and outputs $\{(x_i, y_i)_{i=1}^t\} \in \mathcal{H}$ to an agent $a_{t+1} \in \mathcal{A}$ and a prompt $x_{t+1} \in \mathcal{V}^*$.
\end{defn}

The orchestrator may be deterministic or random, and may be arbitrarily complex. In our setting, the agent's policies are learned, but the orchestrator is not. Even for a fixed orchestrator, the patterns of interactions between agents can change dramatically over the course of training.

\begin{defn}
A \textbf{reward} $r: \mathcal{H}_T \to \mathbb{R}$ assigns a scalar score to a history of length $T$ of prompts and outputs that has reached a terminal state.
\end{defn}

Intuitively, the  reward  is a measure of how effective the society was at functioning in the current history (e.g., in response to an initial prompt or query). For example, a reward might score the accuracy of the final output $y_T$ against some reference; it might include a penalty for the length of the rollout or the number of tool calls; it could be a learned reward model trained from preference annotations; or it might be the output of a prompted auto-evaluator.

\newpage
\section{Additional theory}
\label{supp:theory}

\begin{thm}
Assume without loss of generality that $\alpha_1 > \alphabar > \alpha_2$ and that $\{\alpha_1, \alpha_2\} \ne \beta + \delta$ and $\alpha_1 \ne \alpha_2 + 2\delta$. Then the game in \Cref{tab:payoff-matrix} has a unique pure strategy equilibrium if and only if at least one of two conditions is met: $\alpha_1 > \alpha_2 + 2\delta$ or $\alpha_2 < \beta + \delta$.
Specifically, the unique equilibrium is $(S,G)$ if $\alpha_1 > \beta + \delta$ and $(G,G)$ otherwise. If neither condition is met, then $(S,G)$ and $(G,S)$ are both pure strategy equilibria.
\end{thm}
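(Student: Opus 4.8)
The plan is to determine all pure-strategy Nash equilibria directly from the best-response correspondences of the two players, which for a $2\times 2$ game reduces to comparing the two entries in each column (for Player~1) and each row (for Player~2). Reading the payoffs off \Cref{tab:payoff-matrix}, I would isolate three threshold inequalities that govern every best-response decision: let $A$ denote $\alpha_1 > \alpha_2 + 2\delta$, let $B$ denote $\alpha_1 > \beta + \delta$, and let $C$ denote $\alpha_2 > \beta + \delta$. Here $A$ is equivalent to Player~1 preferring $S$ when Player~2 plays $S$ (since $\alphabar - \delta > \alpha_2 \iff \tfrac{1}{2}(\alpha_1-\alpha_2) > \delta$), $B$ is Player~1 preferring $S$ when Player~2 plays $G$, and $C$ is Player~2 preferring $S$ when Player~1 plays $G$. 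The three non-degeneracy hypotheses ($\{\alpha_1,\alpha_2\}\ne\beta+\delta$ and $\alpha_1\ne\alpha_2+2\delta$) guarantee that each of $A$, $B$, $C$ is strict, so every best response is unique and no profile is an equilibrium ``by indifference.''

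The first key observation I would record is that Player~2 never best-responds to $S$ with $S$: the comparison is $\alphabar - \delta$ versus $\alpha_1$, and $\alphabar - \delta > \alpha_1$ would require $\alpha_2 > \alpha_1 + 2\delta$, which is impossible under $\alpha_1 > \alpha_2$ and $\delta > 0$. This immediately eliminates $(S,S)$ as a candidate. The second, and structurally crucial, observation is that $C$ implies $B$, because $\alpha_1 > \alpha_2$ forces $\alpha_1 > \alpha_2 > \beta + \delta$ whenever $C$ holds. This asymmetry, stemming entirely from the ordering $\alpha_1 > \alphabar > \alpha_2$ so that the game is not symmetric, is what makes the equilibrium structure tractable.

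With the best responses in hand I would check the remaining three profiles one at a time. $(S,G)$ is an equilibrium exactly when $B$ holds (Player~2's $G$ is always a best response to $S$), and $(G,G)$ is an equilibrium exactly when $B$ fails (the condition on Player~2, $\alpha_2 < \beta+\delta$, is then automatic from $\alpha_1 < \beta + \delta$ together with $\alpha_1 > \alpha_2$). Hence precisely one of $(S,G)$, $(G,G)$ is always an equilibrium, matching the ``$(S,G)$ if $\alpha_1>\beta+\delta$ and $(G,G)$ otherwise'' claim. The profile $(G,S)$ is an equilibrium iff $C$ holds and $A$ fails; since $C$ implies $B$, in that event $(S,G)$ is simultaneously an equilibrium, giving two equilibria. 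Conversely, no other pair of profiles can coexist (for instance $(G,S)$ and $(G,G)$ are incompatible because $C$ contradicts $\neg B$). Therefore non-uniqueness occurs exactly when $C \wedge \neg A$, and uniqueness is its negation $A \vee \neg C$, i.e.\ $\alpha_1 > \alpha_2 + 2\delta$ or $\alpha_2 < \beta + \delta$, as claimed.

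I do not expect a single hard step so much as careful bookkeeping: the main pitfall is treating the game as symmetric. Because $\alpha_1 \neq \alpha_2$, Player~1's and Player~2's best-response thresholds differ, and one must keep the roles distinct when checking $(S,G)$ versus $(G,S)$. The cleanest way to avoid errors is to tabulate the four best-response comparisons once, reduce them to the booleans $A$, $B$, $C$, and then read off the equilibria; the implication $C \Rightarrow B$ then does the real work of collapsing the case analysis. The same method, comparing the marginal value of searching against guessing for each agent given the others' actions, should extend to the $n$-player generalization promised in \Cref{supp:theory}.
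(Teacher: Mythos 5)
Your proposal is correct and takes essentially the same approach as the paper's proof: both reduce to the standard best-response analysis of the $2\times 2$ matrix, noting that player 2 always prefers $G$ against $S$ (eliminating $(S,S)$) and using the implication $\alpha_2 > \beta + \delta \Rightarrow \alpha_1 > \beta + \delta$ to collapse the cases. Your boolean bookkeeping with $A$, $B$, $C$ is merely a more systematic presentation of the paper's case split on whether $\alpha_1 > \alpha_2 + 2\delta$.
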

\begin{proof}
In all cases, $r_2(S,G)>r_2(S,S)$ because $\alpha_1 > \overline{\alpha} - \delta$ by construction. If $\alpha_1 > \alpha_2 + 2\delta$ then $r_1(S,S)>r_1(G,S)$. The unique equilibrium is then $(S,G)$ if $\alpha_1 > \beta + \delta$, and $(G,G)$ if $\alpha_1 < \beta + \delta$, because ${\alpha_1 < \beta + \delta} \Rightarrow {\alpha_2 < \beta + \delta} \Rightarrow {r_2(G, S) < r_2(G,G)}.$ 

If $\alpha_1 < \alpha_2 + 2\delta$ then $r_1(G, S) > r_1(S,S)$. Then $(G,S)$ is an equilibrium solution iff $\alpha_2 > \beta + \delta$. In this case $(S,G)$ is also an equilibrium because $\alpha_1 > \alpha_2 > \beta + \delta$ and $r_2(S,G) > r_2(S,S).$ But if $\alpha_2 < \beta + \delta$ then $r_2(G, G) > r_2(G, S)$ and again there is a single unique equilibrium determined by whether $\alpha_1 > \beta + \delta$.\end{proof}

\begin{thm}
\label{thm:n-player}
Consider an $n$-player version of the game from Theorem 1. If there exists some $i$ such that $\alpha_i > \max_{j \ne i} \alpha_j + n \delta$ and $\alpha_i > \beta + \delta$, then the game has a unique equilibrium $\{Z_i = S, Z_{j \ne i}=G\}$. If $\alpha_i < \beta + \delta$ for all $i$ then there is a unique equilibrium $\{Z_i = G\}$.
\end{thm}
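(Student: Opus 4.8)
The plan is to first fix the natural $n$-player extension of the payoffs in \Cref{tab:payoff-matrix}, then verify the two claimed profiles are equilibria, and finally rule out every other pure profile by exhibiting a profitable unilateral deviation. Since a pure profile is determined by its set of searchers $A = \{ j : Z_j = S\}$, this classification is exhaustive. Consistent with the two-player matrix (where $(S,S)$ pays the average $\alphabar$), the asker picks uniformly among the most-confident respondents, so when $A \neq \emptyset$ every searcher receives $\frac{1}{|A|}\sum_{j \in A}\alpha_j - \delta$ and every guesser receives $\frac{1}{|A|}\sum_{j \in A}\alpha_j$, while if $A = \emptyset$ all players receive $\beta$. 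Relabel so that the distinguished index is $i=1$; the hypothesis $\alpha_1 > \max_{j\neq 1}\alpha_j + n\delta$ makes player $1$ the strict accuracy leader.

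For the first claim I would verify that $\{Z_1 = S,\ Z_{j\neq 1}=G\}$ is an equilibrium: player $1$ earns $\alpha_1 - \delta > \beta$ (so it will not switch to $G$), and any guesser $j$ comparing its current payoff $\alpha_1$ against the post-deviation $\tfrac12(\alpha_1+\alpha_j) - \delta$ stays put because $\alpha_1 > \alpha_j$. To get uniqueness I would rule out the remaining profiles in three families. (i) If $A=\emptyset$, player $1$ strictly gains by searching since $\alpha_1 - \delta > \beta$. (ii) If $A = \{k\}$ with $k\neq 1$, player $1$ strictly prefers to join, because joining yields $\tfrac12(\alpha_1+\alpha_k)-\delta > \alpha_k$ exactly when $\alpha_1 > \alpha_k + 2\delta$, which the gap hypothesis supplies with room to spare. (iii) If $|A|\ge 2$, the lowest-accuracy searcher $k$ strictly prefers to drop to $G$: a short computation shows its gain equals $\frac{\bar\alpha_A - \alpha_k + (|A|-1)\delta}{|A|-1}$, which is positive because $\alpha_k \le \bar\alpha_A := \frac{1}{|A|}\sum_{j\in A}\alpha_j$. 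Since $A=\{1\}$ survives and nothing else does, it is the unique equilibrium.

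The second claim is handled by the same bookkeeping. Under $\alpha_i < \beta + \delta$ for all $i$, the all-guess profile $A=\emptyset$ is an equilibrium because any lone searcher would earn $\alpha_i - \delta < \beta$; every profile with $|A|=1$ is killed by that same inequality (the searcher wants to drop out), and every profile with $|A|\ge 2$ is killed by the averaging drop-out computation of family (iii), which needs no gap hypothesis. Hence all-guess is the unique equilibrium.

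Two points deserve care. The reusable engine is the family-(iii) averaging argument, which shows that any profile with at least two searchers is unstable regardless of the accuracy gaps; getting its sign and strictness exactly right (so that the alternatives are strictly, not weakly, non-equilibria, which is what uniqueness demands) is the cleanest part to nail down. The main obstacle is family (ii): verifying that the \emph{leader} strictly wants to break into a singleton searched by the wrong player is precisely where the accuracy-gap hypothesis enters, and one must check that the stated $n\delta$ separation dominates the binding $2\delta$ requirement for every competitor $k\neq 1$ (consistently, for $n=2$ this reduces exactly to the $2\delta$ gap of \Cref{thm:two-player}). Throughout, the strictness of the hypotheses ($\alpha_1 > \max_{j\neq1}\alpha_j + n\delta$ and the $\beta+\delta$ comparisons) is what upgrades ``an equilibrium'' to ``the unique equilibrium.''
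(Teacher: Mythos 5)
Your proof is correct, and it rests on the same computational engine as the paper's --- the observation that in any profile with at least two searchers, the lowest-accuracy searcher $k$ strictly gains by dropping to $G$, with gain $\frac{\overline{\alpha}_A - \alpha_k}{|A|-1} + \delta > 0$ (your family (iii), which matches the paper's ``smallest $\alpha_j$ switches to $G$'' step) --- but the overall architecture is genuinely different. The paper first proves that $S$ is a \emph{strictly dominant} strategy for the leader $i$: a chain of inequalities starting from $\alpha_i > \max_{j\ne i}\alpha_j + n\delta$ shows $r_i(Z_{-i} \oplus Z_i = S) > r_i(Z_{-i} \oplus Z_i = G)$ against every opposing profile, and the worst case of joining up to $n-1$ other searchers is exactly what consumes the full $n\delta$; it also splits on $\max_{j\ne i}\alpha_j \gtrless \beta$ to handle the convention $\overline{\alpha}_{-i} = \beta$ when nobody else searches. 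You never prove dominance: you classify profiles exhaustively by the searcher set $A$ and need the leader's incentive only against $A = \emptyset$ (where $\alpha_1 > \beta + \delta$ suffices) and singletons $A = \{k\}$, $k \ne 1$ (where $\alpha_1 > \alpha_k + 2\delta$ suffices), since every $|A| \ge 2$ profile dies by the averaging argument alone, with no gap hypothesis. A consequence worth making explicit: your route establishes the theorem under the weaker separation $\alpha_i > \max_{j\ne i}\alpha_j + 2\delta$, matching the two-player constant of \Cref{thm:two-player}, whereas the paper's dominance chain genuinely requires $n\delta$. What the paper's approach buys in exchange is the stronger structural fact that the leader searches regardless of the others' play, which makes the subsequent elimination transparent; your approach buys the sharper constant and a fully explicit equilibrium verification --- in particular, you check that no guesser wants to join the profile $\{Z_1 = S,\ Z_{j\ne 1} = G\}$ (payoff $\frac{1}{2}(\alpha_1+\alpha_j)-\delta < \alpha_1$), a step the paper leaves implicit.
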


\begin{proof}
Let $r_i(Z)$ indicate the reward for agent $i$ from the strategy vector $Z = (Z_1, Z_2, \dots, Z_n)$. Let $\overline{\alpha}(Z) = \frac{1}{\sum_i 1[Z_i = S]}\sum_i 1[Z_i=S] \alpha_i$. We will use the shorthand $n_{-i} = \sum_{j\ne i}1[Z_j=S]$ and
\begin{equation}
    \overline{\alpha}_{-i} = \begin{cases}
    \frac{1}{n_{-i}}\sum_{j\ne i} 1[Z_j=S]\alpha_j, & \exists_{j\ne i} Z_j = S\\
    \beta, & \text{otherwise.}
    \end{cases}
\end{equation}
Note that $r_i(Z_{-i} \oplus Z_i := G) = \overline{\alpha}(Z_{-i} \oplus Z_i := G) =  \overline{\alpha}_{-i}$. 

First consider the case $\max_{j\ne i}\alpha_j > \beta.$
\begin{align}
\alpha_i > & \max_{j\ne i}\alpha_j + n\delta \\
\ge& \overline{\alpha}_{-i} + \textcolor{blue}{(n_{-i}+1)\delta}\\
\textcolor{red}{n_{-i} \overline{\alpha}_{-i}} + \alpha_i >& 
 \overline{\alpha}_{-i} + \textcolor{blue}{(n_{-i}+1)\delta}
 + \textcolor{red}{n_{-i} \overline{\alpha}_{-i}} \\
\textcolor{red}{n_{-i} \overline{\alpha}_{-i}} + \alpha_i > & \textcolor{purple}{(n_{-i}+1)}\overline{\alpha}_{-i}+ \textcolor{blue}{(n_{-i}+1)\delta}\\
\textcolor{red}{n_{-i} \overline{\alpha}_{-i}} + \alpha_i - \textcolor{blue}{(n_{-i}+1)\delta} > & \textcolor{purple}{(n_{-i}+1)}\overline{\alpha}_{-i}\\
\frac{1}{\textcolor{purple}{(n_{-i}+1)}}\left(\textcolor{red}{n_{-i} \overline{\alpha}_{-i}} + \alpha_i\right) - \delta
> & \overline{\alpha}_{-i}\\
\overline{\alpha}(Z_{-i} \oplus Z_i = S) - \delta > & \overline{\alpha}_{-i}\\
r_i(Z_{-i} \oplus Z_i = S) >& r_i(Z_{-i} \oplus Z_i = G). 
\end{align}
Thus, when 
$\alpha_i > \max_{j\ne i} \alpha_j + n\delta > \beta + n\delta$, 
player $i$ cannot improve its reward by deviating from $Z_i = S$, regardless of what the other players do. Similarly, as long as the $\alpha$-maximizing player bids $Z_i = S$, the $S$-bidding player with smallest $\alpha_j$ can improve its reward by switching to $Z_j = G$, since this will increase $\overline{\alpha}(Z)$ and also avoid the penalty $\delta$. In equilibrium $Z_j = G$ for all $j\ne i$.

Now consider the case when $\max_{j\ne i}\alpha_j < \beta$. Again, among the players who play $Z_j = S$, the one with the smallest $\alpha_j$ can always improve its reward by switching to $Z_j = G$, which increases $\overline{\alpha}$ and decreases its effort penalty. Ultimately then all $Z_{j\ne i} = G$. If $\alpha_i > \beta + \delta$ then $Z_i = S$, resulting in the same unique equilibrium $\{Z_i = S, Z_{j\ne i} = G\}$. If $\max_i \alpha_i < \beta + \delta$ then no player can improve their reward by deviating from $Z_j = G$.
\end{proof}

\newpage
\section{Equilibria of \Cref{tab:payoff-matrix}}
\label{sec:equilibria}

We show the pure strategy equilibria of the game introduced in \Cref{tab:payoff-matrix} as $\alpha_1$ and $\alpha_2$ vary.

\begin{figure}[h!]
\centering
\begin{tikzpicture}[scale=8]

    \def\betaCoord{.2}
    \def\deltaCoord{.12}

    \draw[->, thick] (0,0) -- (1.2,0) node[right] {$\alpha_2$};
    \draw[->, thick] (0,0) -- (0,1.2) node[above] {$\alpha_1$};
    \draw[thick] (0,0) -- (0, 1) -- (1, 1) -- (1, 0) -- cycle;

    \coordinate (c1) at (\betaCoord+\deltaCoord,\betaCoord +\deltaCoord);  %
    \coordinate (c2) at (\betaCoord +\deltaCoord,\betaCoord +3*\deltaCoord);  %
    \coordinate (c3) at (\betaCoord +3*\deltaCoord,\betaCoord +\deltaCoord);  %
    
    \fill[blue!5] (0,0) -- (0,\betaCoord+\deltaCoord) -- (c1) -- (\betaCoord+\deltaCoord,0) -- cycle; %
    \fill[green!5] (0,\betaCoord+\deltaCoord) -- (c1) -- (c2) -- (1-2*\deltaCoord,1) -- (0,1) -- cycle; %
    \fill[yellow!5] (\betaCoord+\deltaCoord,0) -- (c1) -- (c3) -- (1,1-2*\deltaCoord) -- (1,0) -- cycle; %

    \draw[dashed] (0,\betaCoord+\deltaCoord) -- (\betaCoord + 3*\deltaCoord,\betaCoord+\deltaCoord);
    \draw[dashed] (\betaCoord+\deltaCoord,0) -- (c2);
    \draw[dashed] (c2) -- (1 - 2 * \deltaCoord,1);
    \draw[dashed] (c3) -- (1,1 - 2 * \deltaCoord);
        
    \node at (1/6,1/6) {$(G,G)$};
    \node at (5/6,1/4) {$(G,S)$};
    \node at (1/4,5/6) {$(S,G)$};
    \node at (5/6,5/6) {$\{(G,S), (S,G)\}$};
    
    \draw[-] (0,1) -- (-0.05,1) node[left=1ex] {$\alpha_1 = 1$};
    \draw[-] (0,\betaCoord+\deltaCoord) -- (-0.05,\betaCoord+\deltaCoord) node[left=1ex] {$\alpha_1 = \beta + \delta$};
    \draw[-] (0,\betaCoord+3*\deltaCoord) -- (-0.05,\betaCoord+3*\deltaCoord) node[left=1ex] {$\alpha_1 = \beta + 3\delta$};
    \draw[-] (1,0) -- (1,-0.05) node[below=1ex] {$\alpha_2 = 1$};
    \draw[-] (\betaCoord+\deltaCoord,0) -- (\betaCoord+\deltaCoord,-0.05) node[below=1ex] {$\alpha_2 = \beta + \delta$};
    \draw[-] (\betaCoord+3*\deltaCoord,0) -- (\betaCoord+3*\deltaCoord,-0.05) node[below=1ex] {$\alpha_2 = \beta + 3\delta$};

\end{tikzpicture}
\caption{Pure strategy equilibria of the game introduced in \Cref{tab:payoff-matrix} as $\alpha_1$ and $\alpha_2$ vary. Note that there are multiple equilibria when $\alpha_1$ and $\alpha_2$ are sufficiently high and sufficiently similar.}
\label{fig:equilibria}
\end{figure}
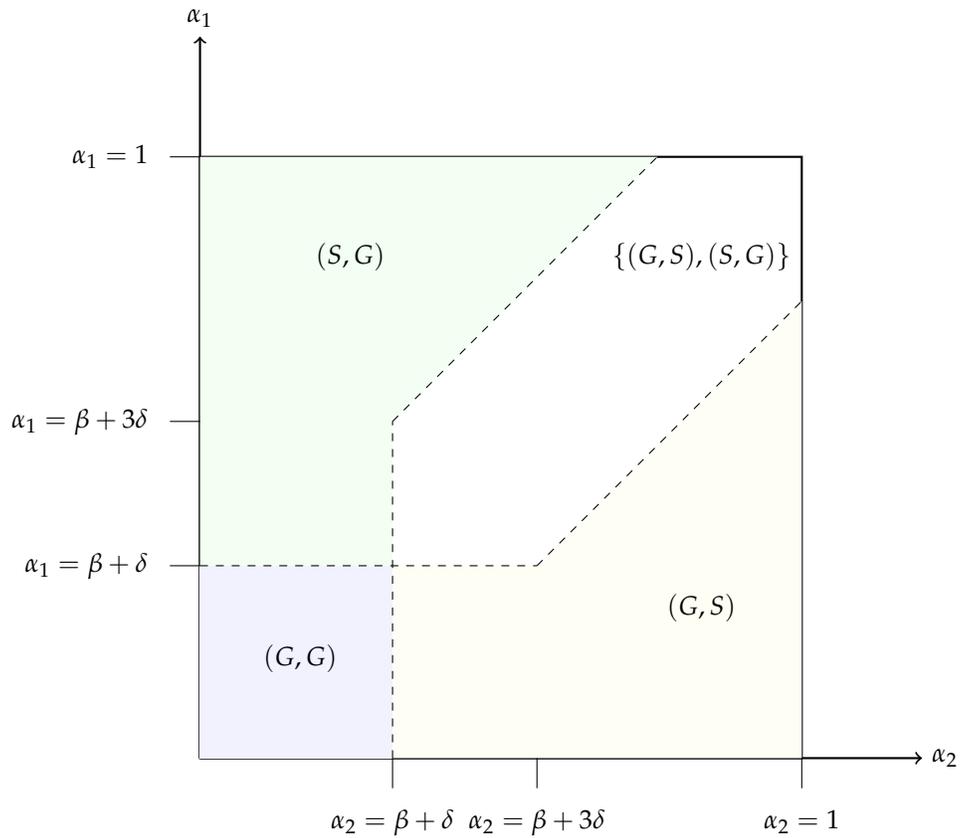

\newpage
\section{ReST learning dynamics}
\label{sec:rest_epoch_results}

Here, we report all metrics (task performance, search rate, search calibration, and answer calibration) for all models after each ReST epoch. The search rate decreases monotonically over the first two epochs, and in most cases continues to decrease in the third epoch (\Cref{fig:rest-search-rate}). Meanwhile task performance increases significantly for the ``mismatched" settings (pubmed/popqa, wiki/bioasq) while decreasing slightly for the ``matched" settings (pubmed/bioasq, wiki/popqa), see \Cref{fig:rest-task-f1}. These two findings are consistent because we already observed that the retrieval results slightly impair performance in the mismatched settings; by searching less frequently, the agents trade off slight regressions in the matched settings for large improvements in the mismatched settings, and a lower effort penalty. 

\Cref{fig:rest-search-calibration} measures search calibration across ReST epochs by
showing the average F$_1$ when sorting questions by P(\action{SEARCH}), and using \action{SEARCH} in 10\%/20\%/50\% of the questions (the setup introduced in \Cref{sec:search_calib}). Search calibration increases in the first two ReST epochs for all methods, plateauing in the third epoch. 
Similarly, \Cref{fig:rest-answer-calibration} measures answer calibration across ReST epochs by showing average F$_1$ when sorting questions by P(\action{ANSWER)} and answering only 10\%/20\%/50\% of the questions (the setup introduced in \Cref{sec:answer_calib}). 
The calibration of $P(\action{ANSWER})$ improves over the first two epochs for collaborative self-play (CSP), but is much less stable for the deanonymization ablation (CSP-DeAnon). We hypothesize this is because the asker learns to attend to the helper identity in the first epoch, which results in the \action{ANSWER} vs. \action{GUESS} choice being less correlated with reward in the rollouts that comprise the training data for the second epoch.

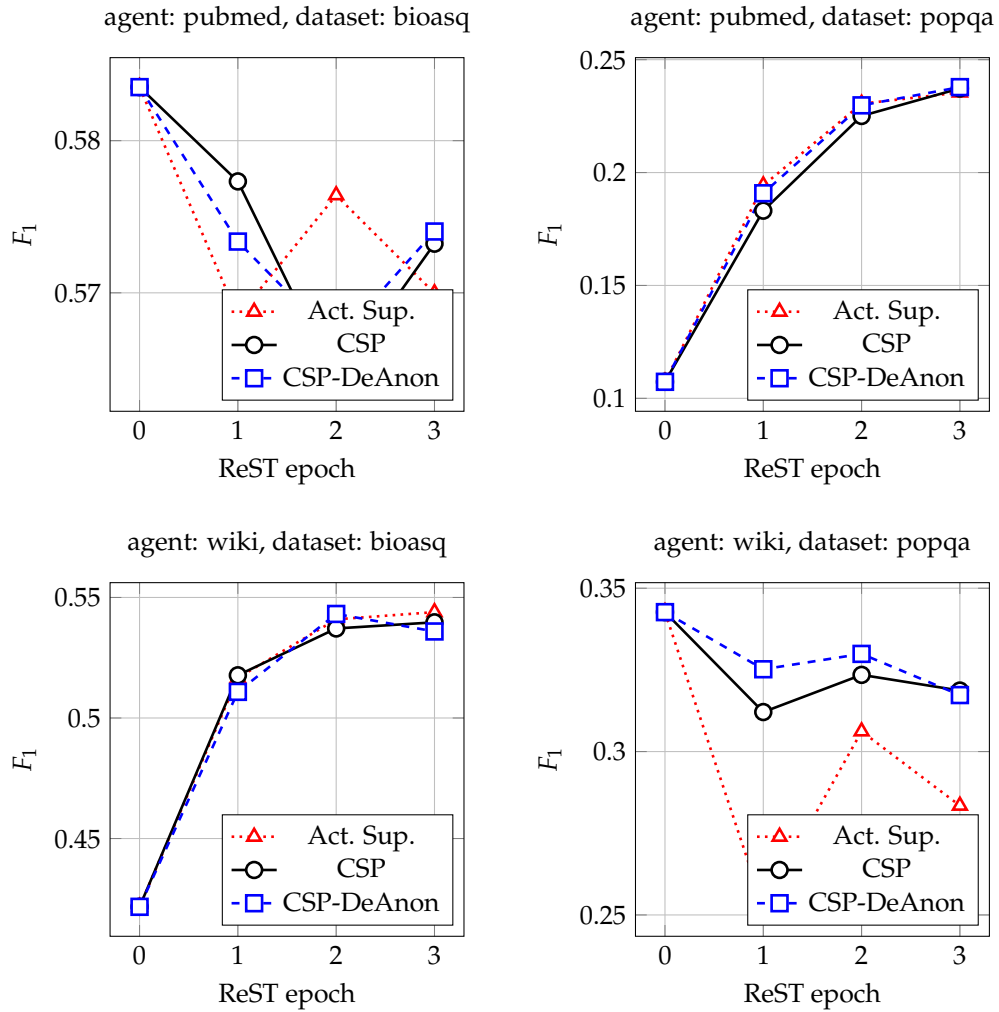
\begin{figure}[h!]
\centering\begin{tikzpicture}
\pgfplotsset{compat=1.16, %
  mymark/.style={
    mark options={
      solid, fill=white, fill opacity=1, scale=1.5
      }}}
\begin{axis}[title={agent: pubmed, dataset: bioasq},
at={(0.0\textwidth,-0.0\textwidth)},
width=0.45\textwidth,
height=0.45\textwidth,
anchor=south west,
ylabel=$F_1$,
xlabel=ReST epoch,
grid=major,
legend pos=south east]
\addplot[line width=1pt, color=red, dotted, mark=triangle*, mymark] coordinates {
(0, 0.583520441142538) (1, 0.5684448815752028) (2, 0.5764151744492828) (3, 0.5699755273332436)
};
\addlegendentry{Act. Sup.}
\addplot[line width=1pt, color=black, solid, mark=*, mymark] coordinates {
(0, 0.583520441142538) (1, 0.5773236062551091) (2, 0.5641516855514194) (3, 0.5732306221083011)
};
\addlegendentry{CSP}
\addplot[line width=1pt, color=blue, dashed, mark=square*, mymark] coordinates {
(0, 0.583520441142538) (1, 0.573369415197002) (2, 0.5667881019601727) (3, 0.5740308384031467)
};
\addlegendentry{CSP-DeAnon}
\end{axis}
\begin{axis}[title={agent: pubmed, dataset: popqa},
at={(0.5\textwidth,-0.0\textwidth)},
width=0.45\textwidth,
height=0.45\textwidth,
anchor=south west,
ylabel=$F_1$,
xlabel=ReST epoch,
grid=major,
legend pos=south east]
\addplot[line width=1pt, color=red, dotted, mark=triangle*, mymark] coordinates {
(0, 0.10731162530280178) (1, 0.19432380952380954) (2, 0.2304835497835498) (3, 0.23547330447330447)
};
\addlegendentry{Act. Sup.}
\addplot[line width=1pt, color=black, solid, mark=*, mymark] coordinates {
(0, 0.10731162530280178) (1, 0.18313275613275612) (2, 0.22506926406926409) (3, 0.23727619047619047)
};
\addlegendentry{CSP}
\addplot[line width=1pt, color=blue, dashed, mark=square*, mymark] coordinates {
(0, 0.10731162530280178) (1, 0.1908293650793651) (2, 0.22977481962481963) (3, 0.23784466089466091)
};
\addlegendentry{CSP-DeAnon}
\end{axis}
\begin{axis}[title={agent: wiki, dataset: bioasq},
at={(0.0\textwidth,-0.5\textwidth)},
width=0.45\textwidth,
height=0.45\textwidth,
anchor=south west,
ylabel=$F_1$,
xlabel=ReST epoch,
grid=major,
legend pos=south east]
\addplot[line width=1pt, color=red, dotted, mark=triangle*, mymark] coordinates {
(0, 0.4216758590037354) (1, 0.5166573194418868) (2, 0.540913076619694) (3, 0.543914866563055)
};
\addlegendentry{Act. Sup.}
\addplot[line width=1pt, color=black, solid, mark=*, mymark] coordinates {
(0, 0.4216758590037354) (1, 0.5177073444366374) (2, 0.5371425136361202) (3, 0.539687641887655)
};
\addlegendentry{CSP}
\addplot[line width=1pt, color=blue, dashed, mark=square*, mymark] coordinates {
(0, 0.4216758590037354) (1, 0.5108413549943497) (2, 0.5431578929378991) (3, 0.5358954488865931)
};
\addlegendentry{CSP-DeAnon}
\end{axis}
\begin{axis}[title={agent: wiki, dataset: popqa},
at={(0.5\textwidth,-0.5\textwidth)},
width=0.45\textwidth,
height=0.45\textwidth,
anchor=south west,
ylabel=$F_1$,
xlabel=ReST epoch,
grid=major,
legend pos=south east]
\addplot[line width=1pt, color=red, dotted, mark=triangle*, mymark] coordinates {
(0, 0.3426755633255633) (1, 0.252489898989899) (2, 0.30618655233655234) (3, 0.283445498945499)
};
\addlegendentry{Act. Sup.}
\addplot[line width=1pt, color=black, solid, mark=*, mymark] coordinates {
(0, 0.3426755633255633) (1, 0.31209682539682543) (2, 0.32347940947940945) (3, 0.3186737817737818)
};
\addlegendentry{CSP}
\addplot[line width=1pt, color=blue, dashed, mark=square*, mymark] coordinates {
(0, 0.3426755633255633) (1, 0.3251955266955267) (2, 0.3298986735486736) (3, 0.31729531024531027)
};
\addlegendentry{CSP-DeAnon}
\end{axis}
\end{tikzpicture}
\caption{Task performance per epoch of ReST}
\label{fig:rest-task-f1}
\end{figure}

\begin{figure}[h!]
\centering\begin{tikzpicture}
\pgfplotsset{compat=1.16, %
  mymark/.style={
    mark options={
      solid, fill=white, fill opacity=1, scale=1.5
      }}}
\begin{axis}[title={agent: pubmed, dataset: bioasq},
at={(0.0\textwidth,-0.0\textwidth)},
width=0.45\textwidth,
height=0.45\textwidth,
anchor=south west,
ylabel=Search rate,
xlabel=ReST epoch,
grid=major,
legend pos=north east]
\addplot[line width=1pt, color=red, dotted, mark=triangle*, mymark] coordinates {
(0, 0.977) (1, 0.29) (2, 0.208) (3, 0.168)
};
\addlegendentry{Act. Sup.}
\addplot[line width=1pt, color=black, solid, mark=*, mymark] coordinates {
(0, 0.977) (1, 0.434) (2, 0.238) (3, 0.211)
};
\addlegendentry{CSP}
\addplot[line width=1pt, color=blue, dashed, mark=square*, mymark] coordinates {
(0, 0.977) (1, 0.446) (2, 0.188) (3, 0.244)
};
\addlegendentry{CSP-DeAnon}
\end{axis}
\begin{axis}[title={agent: pubmed, dataset: popqa},
at={(0.5\textwidth,-0.0\textwidth)},
width=0.45\textwidth,
height=0.45\textwidth,
anchor=south west,
ylabel=Search rate,
xlabel=ReST epoch,
grid=major,
legend pos=north east]
\addplot[line width=1pt, color=red, dotted, mark=triangle*, mymark] coordinates {
(0, 0.982) (1, 0.081) (2, 0.061) (3, 0.031)
};
\addlegendentry{Act. Sup.}
\addplot[line width=1pt, color=black, solid, mark=*, mymark] coordinates {
(0, 0.982) (1, 0.381) (2, 0.163) (3, 0.17)
};
\addlegendentry{CSP}
\addplot[line width=1pt, color=blue, dashed, mark=square*, mymark] coordinates {
(0, 0.982) (1, 0.38) (2, 0.073) (3, 0.084)
};
\addlegendentry{CSP-DeAnon}
\end{axis}
\begin{axis}[title={agent: wiki, dataset: bioasq},
at={(0.0\textwidth,-0.5\textwidth)},
width=0.45\textwidth,
height=0.45\textwidth,
anchor=south west,
ylabel=Search rate,
xlabel=ReST epoch,
grid=major,
legend pos=north east]
\addplot[line width=1pt, color=red, dotted, mark=triangle*, mymark] coordinates {
(0, 0.971) (1, 0.275) (2, 0.202) (3, 0.149)
};
\addlegendentry{Act. Sup.}
\addplot[line width=1pt, color=black, solid, mark=*, mymark] coordinates {
(0, 0.971) (1, 0.406) (2, 0.218) (3, 0.182)
};
\addlegendentry{CSP}
\addplot[line width=1pt, color=blue, dashed, mark=square*, mymark] coordinates {
(0, 0.971) (1, 0.405) (2, 0.151) (3, 0.145)
};
\addlegendentry{CSP-DeAnon}
\end{axis}
\begin{axis}[title={agent: wiki, dataset: popqa},
at={(0.5\textwidth,-0.5\textwidth)},
width=0.45\textwidth,
height=0.45\textwidth,
anchor=south west,
ylabel=Search rate,
xlabel=ReST epoch,
grid=major,
legend pos=north east]
\addplot[line width=1pt, color=red, dotted, mark=triangle*, mymark] coordinates {
(0, 0.996) (1, 0.236) (2, 0.283) (3, 0.178)
};
\addlegendentry{Act. Sup.}
\addplot[line width=1pt, color=black, solid, mark=*, mymark] coordinates {
(0, 0.996) (1, 0.629) (2, 0.504) (3, 0.473)
};
\addlegendentry{CSP}
\addplot[line width=1pt, color=blue, dashed, mark=square*, mymark] coordinates {
(0, 0.996) (1, 0.655) (2, 0.515) (3, 0.485)
};
\addlegendentry{CSP-DeAnon}
\end{axis}
\end{tikzpicture}
\caption{Search rate per epoch of ReST}
\label{fig:rest-search-rate}
\end{figure}
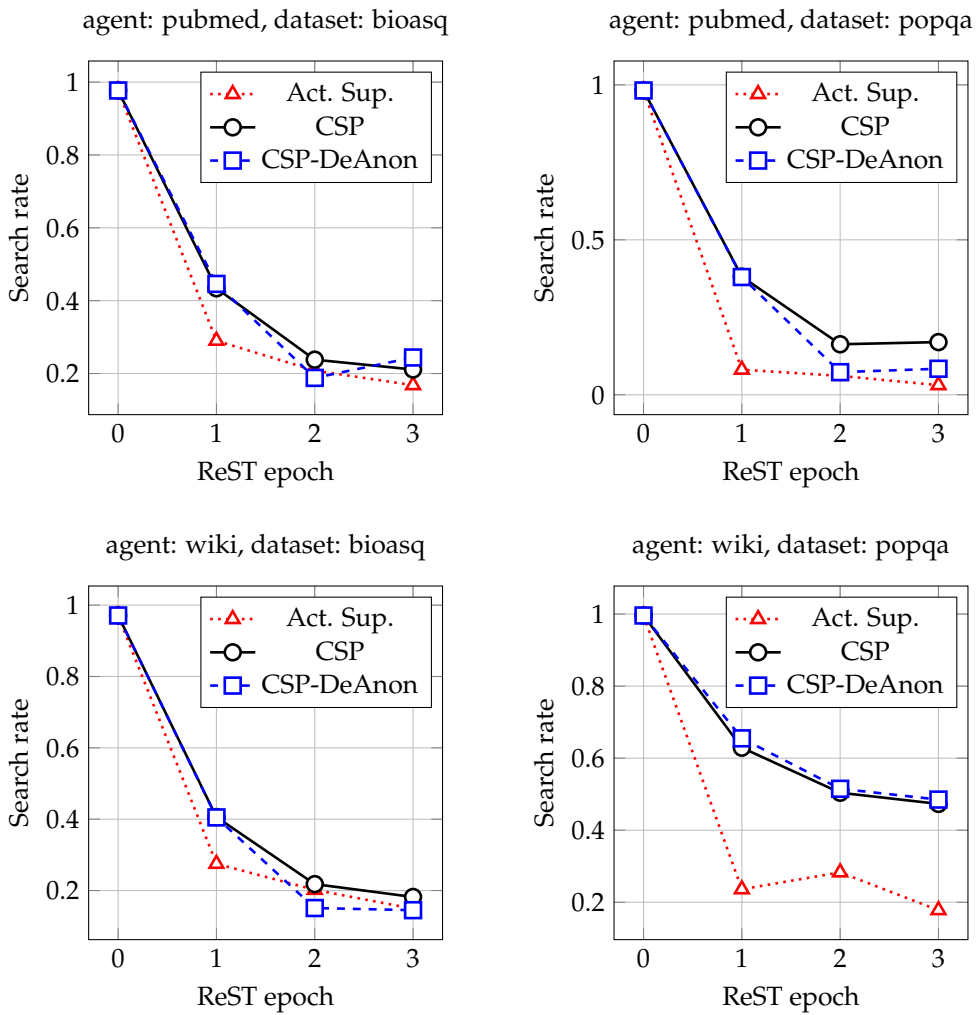

\begin{figure}[h!]
\centering\begin{tikzpicture}
\pgfplotsset{compat=1.16, %
  mymark/.style={
    mark options={
      solid, fill=white, fill opacity=1, scale=1.5
      }}}
\begin{axis}[title={pct searched: 10\%},
at={(0.0\textwidth,0)},
width=0.35\textwidth,
height=0.35\textwidth,
anchor=south west,
ymin=0.3,
ymax=0.4,
xlabel=ReST epoch,
grid=major,
legend style={draw=none, fill=none, opacity=0},
ylabel=$F_1$ agent]
\addplot[line width=1pt, color=red, dotted, mark=triangle*, mymark] coordinates {
(0, 0.31338973047292634) (1, 0.32872214303805763) (2, 0.34259630155968507) (3, 0.3414376682953889)
};
\addlegendentry{Act. Sup.}
\addplot[line width=1pt, color=black, solid, mark=*, mymark] coordinates {
(0, 0.31338973047292634) (1, 0.33221591004981366) (2, 0.3396790706039833) (3, 0.3387084209967779)
};
\addlegendentry{CSP}
\addplot[line width=1pt, color=blue, dashed, mark=square*, mymark] coordinates {
(0, 0.31338973047292634) (1, 0.33550115230655403) (2, 0.3414414312128883) (3, 0.3405228022755841)
};
\addlegendentry{CSP-DeAnon}
\end{axis}
\begin{axis}[title={pct searched: 20\%},
at={(0.3333333333333333\textwidth,0)},
width=0.35\textwidth,
height=0.35\textwidth,
anchor=south west,
ymin=0.3,
ymax=0.4,
xlabel=ReST epoch,
grid=major,
legend style={draw=none, fill=none, opacity=0}]
\addplot[line width=1pt, color=red, dotted, mark=triangle*, mymark] coordinates {
(0, 0.32024112308332214) (1, 0.3404403098341263) (2, 0.36342409346007637) (3, 0.3560827407555117)
};
\addlegendentry{Act. Sup.}
\addplot[line width=1pt, color=black, solid, mark=*, mymark] coordinates {
(0, 0.32024112308332214) (1, 0.34838812960719817) (2, 0.3551509277527481) (3, 0.35406331095991284)
};
\addlegendentry{CSP}
\addplot[line width=1pt, color=blue, dashed, mark=square*, mymark] coordinates {
(0, 0.32024112308332214) (1, 0.356046191993063) (2, 0.3568248163451153) (3, 0.35563653217922553)
};
\addlegendentry{CSP-DeAnon}
\end{axis}
\begin{axis}[title={pct searched: 50\%},
at={(0.6666666666666666\textwidth,0)},
width=0.35\textwidth,
height=0.35\textwidth,
anchor=south west,
ymin=0.3,
ymax=0.4,
xlabel=ReST epoch,
grid=major,
legend style={at={(0.9, 0.1)}, anchor=south east},
legend columns=3]
\addplot[line width=1pt, color=red, dotted, mark=triangle*, mymark] coordinates {
(0, 0.34375694430648784) (1, 0.3635162209120579) (2, 0.36998983858441253) (3, 0.3730516393532658)
};
\addlegendentry{Act. Sup.}
\addplot[line width=1pt, color=black, solid, mark=*, mymark] coordinates {
(0, 0.34375694430648784) (1, 0.3639502848017211) (2, 0.3710434278057946) (3, 0.37098814597126656)
};
\addlegendentry{CSP}
\addplot[line width=1pt, color=blue, dashed, mark=square*, mymark] coordinates {
(0, 0.34375694430648784) (1, 0.3683386355318652) (2, 0.37361187141596885) (3, 0.37316166243498106)
};
\addlegendentry{CSP-DeAnon}
\end{axis}
\end{tikzpicture}
\caption{\textbf{Calibration of P(\action{SEARCH}) through ReST}. Same setting as \Cref{fig:search_calibration}.
We show average F$_1$ when issuing search only for 10\%/20\%/50\% of the queries with highest P(\action{SEARCH}) for all ReST epochs.
Calibration improves in the first and second ReST epochs (evidenced by higher F$_1$ when using \action{SEARCH} in only a fraction of the questions), and is similar across methods.}
\label{fig:rest-search-calibration}
\end{figure}
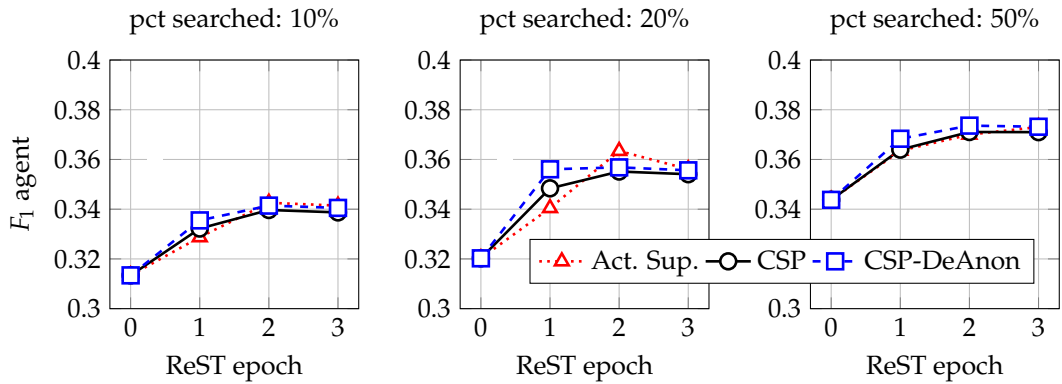

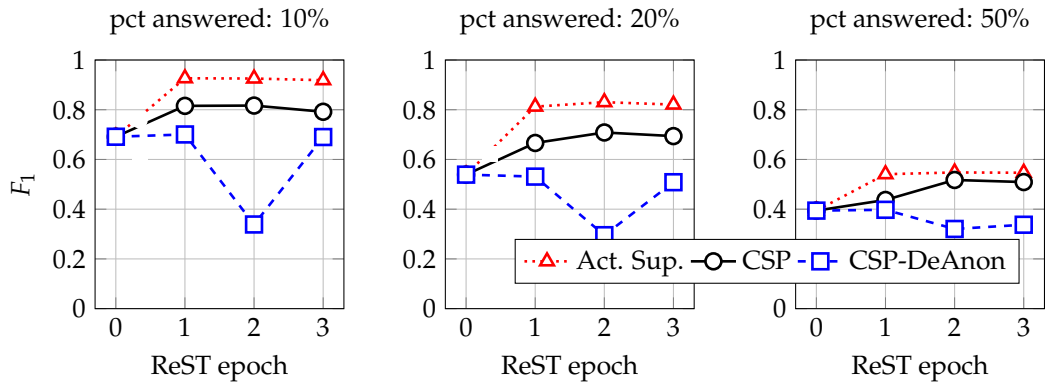
\begin{figure}[h!]
\centering\begin{tikzpicture}
\pgfplotsset{compat=1.16, %
  mymark/.style={
    mark options={
      solid, fill=white, fill opacity=1, scale=1.5
      }}}
\begin{axis}[title={pct answered: 10\%},
at={(0.0\textwidth,0)},
width=0.35\textwidth,
height=0.35\textwidth,
anchor=south west,
ymin=0,
ymax=1,
xlabel=ReST epoch,
grid=major,
legend style={draw=none, fill=none, opacity=0},
ylabel=$F_1$]
\addplot[line width=1pt, color=red, dotted, mark=triangle*, mymark] coordinates {
(0, 0.6913945685108864) (1, 0.9265087301587303) (2, 0.9254611111111111) (3, 0.9187333333333334)
};
\addlegendentry{Act. Sup.}
\addplot[line width=1pt, color=black, solid, mark=*, mymark] coordinates {
(0, 0.6913945685108864) (1, 0.8155400793650794) (2, 0.8167256862418627) (3, 0.7918246031746032)
};
\addlegendentry{CSP}
\addplot[line width=1pt, color=blue, dashed, mark=square*, mymark] coordinates {
(0, 0.6913945685108864) (1, 0.7005607724662678) (2, 0.33849196805038023) (3, 0.6902095238095238)
};
\addlegendentry{CSP-DeAnon}
\end{axis}
\begin{axis}[title={pct answered: 20\%},
at={(0.3333333333333333\textwidth,0)},
width=0.35\textwidth,
height=0.35\textwidth,
anchor=south west,
ymin=0,
ymax=1,
xlabel=ReST epoch,
grid=major,
legend style={draw=none, fill=none, opacity=0}]
\addplot[line width=1pt, color=red, dotted, mark=triangle*, mymark] coordinates {
(0, 0.5389509856690794) (1, 0.8124612484737485) (2, 0.8303740564990565) (3, 0.820738492063492)
};
\addlegendentry{Act. Sup.}
\addplot[line width=1pt, color=black, solid, mark=*, mymark] coordinates {
(0, 0.5389509856690794) (1, 0.666328228202179) (2, 0.7085024710626491) (3, 0.6939094431712078)
};
\addlegendentry{CSP}
\addplot[line width=1pt, color=blue, dashed, mark=square*, mymark] coordinates {
(0, 0.5389509856690794) (1, 0.5310368728168912) (2, 0.29547725498222793) (3, 0.5085058630430995)
};
\addlegendentry{CSP-DeAnon}
\end{axis}
\begin{axis}[title={pct answered: 50\%},
at={(0.6666666666666666\textwidth,0)},
width=0.35\textwidth,
height=0.35\textwidth,
anchor=south west,
ymin=0,
ymax=1,
xlabel=ReST epoch,
grid=major,
legend style={at={(0.9, 0.1)}, anchor=south east},
legend columns=3]
\addplot[line width=1pt, color=red, dotted, mark=triangle*, mymark] coordinates {
(0, 0.394238532342441) (1, 0.540635969165853) (2, 0.5477547927053277) (3, 0.5465282135747116)
};
\addlegendentry{Act. Sup.}
\addplot[line width=1pt, color=black, solid, mark=*, mymark] coordinates {
(0, 0.394238532342441) (1, 0.43711580051806515) (2, 0.5174966930242594) (3, 0.5090843720593663)
};
\addlegendentry{CSP}
\addplot[line width=1pt, color=blue, dashed, mark=square*, mymark] coordinates {
(0, 0.394238532342441) (1, 0.3975518451194317) (2, 0.3204882194434626) (3, 0.33747331979171435)
};
\addlegendentry{CSP-DeAnon}
\end{axis}
\end{tikzpicture}
\caption{\textbf{Calibration of P(\action{ANSWER}) through ReST}. Same setting as \Cref{fig:answer_calibration}.
We show average F$_1$ when answering 10\%/20\%/50\% of the questions with highest P(\action{ANSWER}).
Answer calibration improves (evidenced by higher F$_1$ when answering only a fraction of the questions)  for action supervision (Act. Sup.) and collaborative self-play (CSP), but not for CSP-DeAnon, because the asker can achieve high task accuracy without attending to the helpers' confidence.}
\label{fig:rest-answer-calibration}
\end{figure}

\end{document}